\title{Rethinking Neural Operations for Diverse Tasks}
\newcommand\blankfootnote[1]{%
	\begingroup
	\renewcommand\thefootnote{}\footnote{#1}%
	\addtocounter{footnote}{-1}%
	\endgroup
}
\author{%
  Nicholas Roberts$^\ast$\\
  University of Wisconsin-Madison\\
  \texttt{nick11roberts@cs.wisc.edu}\\
  \And Mikhail Khodak$^\ast$\\
  Carnegie Mellon University\\
  \texttt{khodak@cmu.edu}\\
  \And Tri Dao\\
  Stanford University\\
  \texttt{trid@stanford.edu}\\
  \And Liam Li\\
  Hewlett Packard Enterprise\\
  \texttt{me@liamcli.com} \\
  \And Christopher R\'e\\
  Stanford University\\
  \texttt{chrismre@cs.stanford.edu}\\
  \And Ameet Talwalkar \\
  Carnegie Mellon University \& Hewlett Packard Enterprise\\
  \texttt{talwalkar@cmu.edu}
}
\newtheorem{Def}{Definition}[section]
\newtheorem{Clm}{Claim}[section]
\newtheorem{Set}{Setting}[section]
\newtheorem{Rem}{Remark}[section]
\newcommand\XD{\mathbf{XD}}
\DeclareMathOperator{\diag}{\operatorname{diag}}
\DeclareMathOperator{\Op}{\operatorname{\bf Op}}
\DeclareMathOperator{\Real}{\operatorname{Real}}
\newcommand{\BigO}{\mathcal O}
\DeclareMathOperator{\Conv}{\operatorname{\bf Conv}}
\DeclareMathOperator{\Lin}{\operatorname{\bf Lin}}
\DeclareMathOperator{\Id}{\operatorname{\bf Id}}
\DeclareMathOperator{\Zero}{\operatorname{\bf Zero}}
\DeclareMathOperator{\MaxP}{\operatorname{\bf MaxPool}}
\DeclareMathOperator{\AvgP}{\operatorname{\bf AvgPool}}
\DeclareMathOperator{\DilC}{\operatorname{\bf DilatedConv}}
\newcommand{\Search}{\mathcal S}
\newcommand{\disc}{\mathbf{discrete}}
\newcommand{\A}{\mathcal A}
\newcommand{\C}{\mathbb C}
\newcommand{\F}{\mathcal F}
\newcommand{\G}{\mathcal G}
\newcommand{\N}{\mathbb N}
\newcommand{\R}{\mathbb R}
\newcommand{\W}{\mathcal W}
\newcommand{\X}{\mathcal X}
\newcommand{\Y}{\mathcal Y}
\newcommand{\Z}{\mathbb Z}
\newcommand{\0}{\mathbf 0}
\newcommand{\1}{\mathbf 1}
\newcommand{\atr}{\mathfrak{a}}
\let\svsqrt\sqrt
\newsavebox\Nsqrt
\def\sr#1{\ThisStyle{%
		\savebox\Nsqrt{\scalebox{.5}[1]{$\SavedStyle\svsqrt{\phantom{\cramped{#1#1}}}$}}%
		\ooalign{\usebox{\Nsqrt}\cr\kern.2pt\usebox{\Nsqrt}\cr\hfil$\SavedStyle\cramped{#1}$}}}
\def\*#1{\mathbf{#1}}
\begin{document}

\maketitle

\vspace{-5mm}
\blankfootnote{$\ast$ denotes equal contribution.}

\begin{abstract}

	An important goal of AutoML is to automate-away the design of neural networks on new tasks in under-explored domains.
	Motivated by this goal, we study the problem of enabling users to discover the right neural operations given data from their specific domain.
	We introduce a search space of operations called XD-Operations that mimic the inductive bias of standard multi-channel convolutions while being much more expressive:
	we prove that it includes many named operations across multiple application areas.
	Starting with any standard backbone such as ResNet, we show how to transform it into a search space over XD-operations and how to traverse the space using a simple weight-sharing scheme.
	On a diverse set of tasks---solving PDEs, distance prediction for protein folding, and music modeling---our approach consistently yields models with lower error than baseline networks and often even lower error than expert-designed domain-specific approaches.
	
\end{abstract}

\vspace{-1mm}
\section{Introduction}
\vspace{-1mm}

Automated machine learning (AutoML) and neural architecture search (NAS) are often motivated by a vision of democratizing ML by reducing the need for expert design on a variety of tasks.
While NAS has grown rapidly with developments such as weight-sharing~\citep{pham2018enas} and ``NAS-benches'' \citep{ying2019nasbench101,zela2020nasbench1shot1}, most efforts focus on search spaces that glue together established primitives for well-studied tasks like vision and text \citep{liu2019darts,li2019rsws,xu2020pcdarts,li2021gaea} or on issues such as latency \citep{cai2020ofa,fang2020densenas}.
In this work, we revisit the broader vision of NAS and propose to move towards much more general search spaces while still exploiting successful network topologies.
To do so we focus on expanding the set of operations, which is usually fairly small;
for example, that of the well-studied DARTS space has eight elements: a few types of convolution and pooling layers \citep{liu2019darts}.
The baseline approach for expanding this set---adding operations one-by-one---scales poorly and will not result in new operations when faced with new types of data.


Our core contribution is a re-imagining of NAS operation spaces that drastically expands this set in a principled fashion to include both standard operations as well as a wide range of new ones. 
To do so we exploit the fact that most standard operations used in modern NAS return linear transforms diagonalized by the discrete Fourier transform (DFT). Replacing the DFT matrices in the diagonal decomposition by a more expressive family of efficient linear transforms known as {\em Kaleidoscope} or {\em K-matrices} \citep{dao2020kaleidoscope} yields the set of {\bf Expressive Diagonalization (XD) Operations}, which comprise a large search space containing various types of grid-based convolutions and pooling, permutations, transposed convolutions, certain kinds of graph convolutions, the Fourier Neural Operator (FNO)~\citep{li2021fno}, and infinitely many more.
This broad expressivity reflects the key insight of our work:
that many of the most important neural operations in ML consist of multiple channels that apply weights $\*w$ to inputs $\*x$ by computing
\begin{equation}
\*K\diag(\*L\*w)\*M\*x
\end{equation}
where the matrices $\*K$, $\*L$, and $\*M$ are {\em efficient} (to represent and apply) and shared across channels.

\begin{figure*}[t!]
	\centering
		\includegraphics[width=\linewidth]{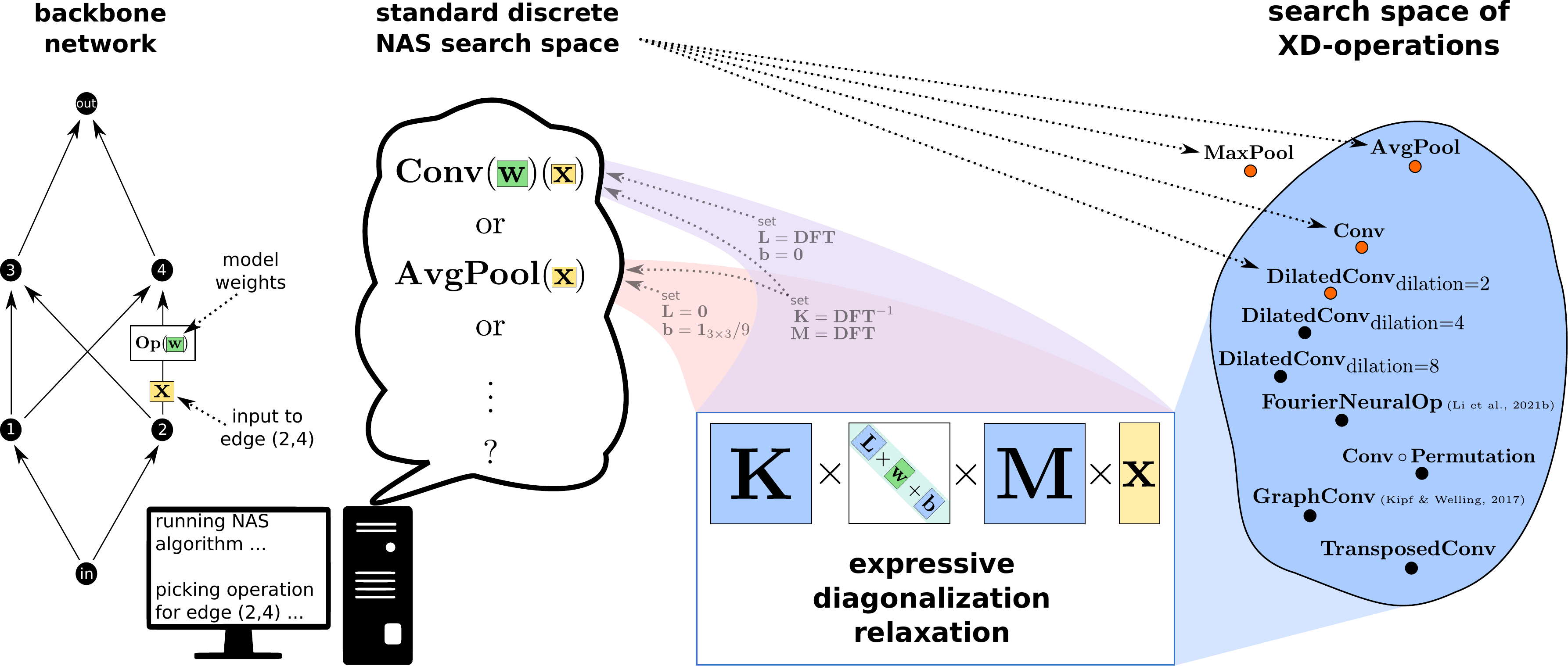}
	\caption{\label{fig:main}
		Diagram of our search space depicting a NAS method picking an operation for an edge in a backbone network (left).
		Instead of choosing from a discrete search space, we use a relaxation based on the convolution's diagonalization by the discrete Fourier transform in which the DFTs are replaced by K-matrices \citep{dao2020kaleidoscope} $\*K$, $\*L$, and $\*M$ (middle);
		these are the main architecture parameters of our new search space over Expressive Diagonalization (XD) operations.
		This space contains most operations considered in standard NAS and many other important operations in a variety of domains (right).
	}
\end{figure*}

We leverage XD-operations to take critical steps towards a broader NAS that enables the discovery of good design patterns with limited human specification from data in under-explored domains.
To do so we develop a simple procedure which transforms any backbone convolutional neural network (CNN) into an architecture search space by replacing its operations with XD-operations.
This space is then searched using a simple weight-sharing algorithm that needs only a small amount of tuning to find effective operations.
As a simple first demonstration, we show that XD-operations yield models that are 15\% more accurate than standard discrete search spaces on {\em permuted} CIFAR-10, highlighting the fragility of standard NAS operation spaces on new datasets, and thus the need for XD-operations.

As our main evaluation, we demonstrate the effectiveness of XD-operations in a series of applications showing that, starting from vanilla CNNs, they consistently outperform custom-designed operations.
\begin{itemize}[leftmargin=*,topsep=-1pt,noitemsep]\setlength\itemsep{2pt}
	\item {\bf Learning to solve partial differential equations (PDEs):} when substituted into a simple CNN backbone, XD-operations outperform convolutions and the dense prediction NAS method Auto-DeepLab \citep{liu2019autodl}, and even achieve lower error than custom-designed, state-of-the-art operations (FNOs \citep{li2021fno}) across three problems with different dimensionalities (Burgers' equation, Darcy Flow, and Navier-Stokes).
	Our method also maintains consistent performance across different resolutions, a major stated advantage of FNOs over previous methods.
	\item {\bf Protein folding:}
	on the task of predicting residue distances in a polypeptide chain---a key component of the protein folding problem---we substitute XD-operations into vanilla ResNets and achieve lower error than cyclically-dilated ResNets adapted specifically for this setting \citep{adhikari2020}.
	Furthermore, our ResNet-34 XD outperforms the reported error of the much deeper Dilated ResNet-258.
	\item {\bf Music modeling:} on two next-note prediction tasks, we show that substituting XD-operations into an undilated CNN  outperforms temporal convolutional networks (TCNs)---exponentially-dilated 1d CNNs that themselves outperform standard convolutional and recurrent networks \citep{bai2018tcn}.
\end{itemize}

Code to reproduce these results is available here: \url{https://github.com/nick11roberts/XD}.
Software to apply XD-operations can be found here: \url{https://github.com/mkhodak/relax}.

%
%

\paragraph{Related Work}
AutoML is a well-studied area, with most work focusing on fairly small hyperparameter spaces~\citep{bergstra2012rs,li2018hyperband} or on NAS \citep{elsken2019nas}.
Most NAS operation spaces only contain a few operations such as convolutions \citep{liu2019darts,mei2020atomnas,zela2020nasbench1shot1,dong2020nasbench201}, which may not be useful for domains where CNNs are ineffective.
Applications of NAS outside vision largely follow the same pattern of combining human-designed operations \citep{nekrasov2019nas,wang2020textnas}. 
On the other extreme, AutoML-Zero \citep{real2020automlzero} demonstrates the possibility of evolving all aspects of ML from scratch.
We seek to establish a middle ground with large and domain-agnostic search spaces that still allow the use of well-tested methods, e.g. stochastic gradient descent (SGD).

Several papers have generalized the DFT to replace layers in deep nets \citep{dao2019butterfly,alizadeh2020butterfly,ailon2020butterfly,dao2020kaleidoscope} in order to speed up or add structure to models while {\em reducing} expressivity.
In contrast, we can replace {\em convolutions} and other layers while {\em increasing} expressivity by extending their diagonalization via K-matrices.
As discussed in Section~\ref{sec:relax}, using K-matrices for this directly is inefficient for input dimension $>1$.



\section{The Expressive Diagonalization Relaxation}\label{sec:relax}
\vspace{-1mm}

In this section we overview our main contribution:
a large, general search space of neural operations.
Formally, we view an architecture as a {\em parameterizable} object---a mapping from model weights to functions---described by a {\em labeled} directed acyclic graph (DAG) $\G(V,E)$.
Each edge in $E$ has the form $(u,v,\Op)$, where $u,v\in V$ are nodes and $\Op$ is an operation that can be parameterized to define some transformation of the representation at node $u$;
node $v$ aggregates the outputs of its incoming edges into a new representation.
For example, the popular ResNet architecture \citep{he2016resnet} has many nodes with two incoming edges, one labeled by the convolution operation $\Conv$ and one by the identity (skip-connect) $\Id$, whose outputs it sums and passes to outgoing edges with the same labels.
Each architecture has a source node taking in input data and an output node returning a prediction.

Neural architecture search is the problem of automatically selecting an operation for each edge of $\G$ to optimize an objective.\footnote{It is often defined as selecting both operations and a graph topology \citep{zoph2018nas}, but if the set of operations contains the zero-operation $\Zero$ then the former subsumes the latter.}
For each edge $e\in E$ a NAS algorithm must pick one element of a {\em search space} $\Search=\{\Op_a\vert a\in\A\}$ of operations specified by architecture parameters $a\in\A$ to assign to $e$;
in past work, $\A$ usually indexes a small set of operations.
As an example, we will refer to a variant\footnote{For memory-efficiency, all convolutions in the original DARTS search space are separable \citep{liu2019darts}.} $\Search_\disc$ of the DARTS search space with parameters $\A_\disc=\{1,\dots,8\}$ where each operation is one of $\Zero$, $\Id$, $\MaxP_{3\times 3}$, $\AvgP_{3\times 3}$, $\Conv_{3\times 3\textrm{ or }5\times 5}$, or $\DilC_{3\times 3,2\textrm{ or }5\times 5,2}$~\citep{liu2019darts}.

Our main contribution is a novel family of operations that comprise a search space containing almost all these operations, in addition to many others that have been found useful on different types of data.
The starting point of our construction of these XD-operations is the simple observation that all the operations $\Op\in\Search_\disc$ listed above except $\MaxP_{3\times3}$ are {\em linear}, i.e.
for any model weights $\*w$ there exists a matrix $\*A_{\*w}$ such that for all inputs $\*x$ we have $\Op(\*w)(\*x)=\*A_{\*w}\*x$.
More specifically, all seven of them return convolutions:
to see this note that $\Zero$, $\Id$, and $\AvgP_{3\times3}$ each apply a convolution with filter $\0_{1\times1}$, $\1_{1\times1}$, and $\1_{3\times3}/9$, respectively.
This means that most of the operations in the DARTS search space---which is representative of NAS operation spaces in computer vision---share the convolution's diagonalization by the discrete Fourier transform (DFT).
Formally, if $\*A_{\*w}\in\R^{n^2\times n^2}$ is the matrix representing a 2d convolution with filter $\*w\in\R^{\*k}$ of kernel size $\*k\in[n]^2$, then for any 2d input $\*x\in\R^{n^2}$ we have
\begin{equation}\label{eq:fourier}
\Conv(\*w)(\*x)=\*A_{\*w}\*x=\*F^{-1}\diag\left(\*F\underline{\*w}\right)\*F\*x
\end{equation}
Here $[n]=\{1,\dots,n\}$, $\diag(\*z)$ denotes the diagonal matrix with entries $\*z$, $\underline{\*w}\in\R^{n^2}$ is an appropriate zero-padding of $\*w\in\R^{\*k}$, and $\*F\in\C^{n^2\times n^2}$ is the 2d DFT (a Kronecker product of two 1d DFTs).

This diagonalization explicates both the computational and representational efficiency of the DARTS operations, as the DFT and its inverse can be applied in time $\BigO(n\log n)$ and stored with $\BigO(n\log n)$ bits.
It also suggests a natural way to dramatically expand the operation space while preserving these efficiencies:
just replace matrices $\*F$ and $\*F^{-1}$ in \eqref{eq:fourier} by any one of a general family of efficient matrices.
Doing so yields the single-channel version of our {\em expressive diagonalization} (XD) operations:
\begin{equation}\label{eq:xd1}
\XD_\alpha^\1(\*w)(\*x)=\Real\left(\*K\diag\left(\*L\underline{\*w}\right)\*M\*x\right)
\end{equation}
Here architecture parameter $\alpha=(\*K,\*L,\*M)$ sets the matrices replacing $\*F$ and $\*F^{-1}$ in Equation~\ref{eq:fourier}.

The main remaining question is the family of efficient matrices to use, i.e. the domain of the architecture parameters $\*K$, $\*L$, and $\*M$.
For this we turn to the Kaleidoscope matrices, or {\em K-matrices}~\citep{dao2020kaleidoscope}, which generalize $\*F$ and $\*F^{-1}$ to include all computationally efficient linear transforms with short description length, including important examples such as sparse matrices and permutations.
To obtain this general family, K-matrices allow the DFT's butterfly factors---matrices whose products yield its efficient implementation---to take on different values.
While a detailed construction of K-matrices can be found in the original paper, we need only the following useful properties: 
they are as (asymptotically) efficient to apply as DFTs, are differentiable and can thus be updated using gradient-based methods, and can be composed (made ``deeper'') to make more expressive K-matrices.

Specifying that $\*K$, $\*L$, and $\*M$ in Equation~\ref{eq:xd1} are K-matrices largely completes our core contribution:
a new search space $\Search_\XD$ of XD-operations with K-matrix architecture parameters.
We give a full multi-channel formalization in $N$ dimensions, as well as an overview of its expressivity, in Section~\ref{sec:xd}.
First, we note some key aspects of this new search space:
\begin{itemize}[leftmargin=*,topsep=-1pt,noitemsep]\setlength\itemsep{2pt}
	\item{\bf Complexity:} $\XD_\alpha^\1(\*w)$ requires three K-matrices and $\BigO(1)$ filter weights to represent, i.e. description length $\BigO(n\log n)$;
	this is larger than a regular convolution (which has no architecture parameters) but is not quadratic in the input size like a linear layer.
	Applying $\XD_\alpha^\1$ requires multiplication by three K-matrices, yielding a theoretical per-channel time complexity of $\BigO(n\log n)$, matching the efficiency of convolutions.
	However, as XD-operations strictly generalize convolutions they are more expensive to apply in-practice;
	we detail these costs both in the application sections and as appendix table, and we view improving upon them as an important future direction.
	\item{\bf Initialization:} a crucial advantage of XD-operations is that we can initialize or {\em warm-start} search using operations with known constructions.
	In particular, since we can recover convolutions \eqref{eq:fourier} by setting architecture parameters $\*K=\*F^{-1}$, $\*L=\*F$, and $\*M=\*F$ in Equation~\ref{eq:xd1}, we can always start search with any CNN backbone.
	We use this extensively in experiments.
	\item{\bf K-matrices:} as they contain all efficient linear transforms, K-matrices can represent all functions returned by XD-operations, including convolutions.
	However, for input dimension and filter size $>1$ the only known way is to apply K-matrices directly to flattened inputs $\*x\in\R^{n^N}$, yielding much worse description length $\BigO(n^N\log n)$.
	In contrast, as detailed in Section~\ref{sec:xd}, our diagonalization approach uses Kronecker products to apply DFTs to each dimension separately, yielding description length $\BigO(n\log n)$.
	It is thus the first (and in some sense, ``right'') method to use such matrices to replace convolutions.
	Furthermore, diagonalization allows us to separate model weights $\*w$ from architecture parameters $\alpha$, letting the former vary across channels while fixing the latter.
\end{itemize}

Finally, we address the fact that the architecture parameters of $\Search_\XD$ are continuous, not discrete, contrasting with much of the NAS literature.
This can be viewed as a natural extension of the weight-sharing paradigm \citep{pham2018enas}, in which continuous relaxation enables updating architecture parameters with gradient methods.
For example, many algorithms traverse the relaxed DARTS search space 
$\tilde\Search_\disc=\left\{\sum_{i=1}^8\lambda_i\Op_i\vert\lambda_i\ge0,\sum_{i=1}^8\lambda_i=1\right\}$, defined via DARTS operations $\Op_i\in\Search_\disc$ and architecture parameters $\lambda_i$ in the 8-simplex;
most search spaces then require discretizing after search via a rounding procedure that maps from the simplex to $\A_\disc$.
Note that the fully continuous nature of XD-operations means that we will only evaluate the final network returned by search.
In particular, while some weight-sharing papers also report the correlation between true architecture performance and that indicated by the shared weights \citep{yang2020nas}, there is no obvious way to define a ranking or sampling distribution over XD-operations in order to do so.
This also means that our final architecture will not be more efficient than the supernet, unlike other weight-sharing methods that do discretize.

\vspace{-1mm}
\section{XD-Operations and Their Expressivity}\label{sec:xd}
\vspace{-1mm}

Here we formalize XD-operations and show what operations they include.
We first define operations:
\begin{Def}\label{def:po}
	A {\bf parameterizable operation} is a mapping $\Op:\W\mapsto\F$ from parameter space $\W$ to a space $\F=\{\Op(\*w):\X\mapsto\Y\vert\*w\in\W\}$ of {\bf parameterized functions} from input space $\X$ to output space $\Y$.
	A {\bf search space} is a set of operations with the same $\W$, $\X$, and $\Y$.
\end{Def}

For example, if $\X=\Y=\R^n$ and $\W=\R^{n\times n}$ then each $\*W\in\W$ defines a parameterized linear layer that for each $\*x\in\X$ returns $\Lin(\*W)(\*x)=\*W\*x$.
Here $\Lin$ is the parameterizable operation and for each $\*W$ the linear map $\Lin(\*W)$ is the parameterized function.

From Definition~\ref{def:po}, we say a search space can {\em express} a specific operation if it contains it.
Crucially, the ability of a parameterizable operation $\Op_1$ to express a parameterized function $\Op_2(\*w)$ output from another operation $\Op_2$ given the right set of weights $\*w$ does {\em not} imply that a search space containing $\Op_1$ can express $\Op_2$.
For example, $\Lin(\*I_n)=\Id(\*W)~\forall~\*W\in\R^{n\times n}$ but $\Lin(\*W)\ne\Id(\*W)~\forall~\*W\ne\*I_n$, so a search space containing the linear operation $\Lin$ cannot express the skip-connection $\Id$, despite the fact that $\Lin$ can be parameterized to compute the identity.

\paragraph{Formalizing Multi-Channel XD-Operations}
Recall the single-channel XD-operation $\XD_\alpha^\1$ in Equation~\ref{eq:xd1} specified by three-matrix architecture parameter $\alpha=(\*K,\*L,\*M)$.
For input dimension $N\ge1$, every matrix $\*B\in\alpha$ is a Kronecker product of $N$ $K$-matrices of depth $\*d\in\Z_+^3$, i.e. $\*B=\bigotimes_{i=1}^N\*B_i$ for K-matrices $\*B_i\in\C^{n\times n}$ of depth $\*d_{[1]}$, $\*d_{[2]}$, or $\*d_{[3]}$ for $\*B=\*K$, $\*L$, or $\*M$, respectively.\footnote{A depth-$d$ K-matrix is a product of $d$ depth-1 K-matrices.}
Roughly speaking, $\XD_\alpha^\1$ can return any linear operation that is diagonalized by K-matrices and is thus efficient to compute and represent, e.g. any convolution (recall we recover the diagonalization of $\Conv(\*w)$ in Equation~\ref{eq:fourier} by setting $\*K$, $\*L$, and $\*M$ appropriately in Equation~\ref{eq:xd1}).
However, $\XD_\alpha^\1$ cannot represent efficient {\em parameter-free} operations such as skip-connections and average-pooling, both common in NAS.
In particular, the only way to always ignore the model weights $\*w$ is to set one of the K-matrices to zero, producing the zero-operation.
We avoid this by adding a bias $\*b\in\C^{n^N}$ as an architecture parameter, yielding the {\em biased} single-channel XD-operation:\footnote{Zero-padding $\*x$ as well lets the input to be smaller than the output if needed, e.g. for transposed convolutions.}
\begin{equation}\label{eq:biased}
\XD_{\alpha,\*b}^\1(\*w)(\*x)=\Real\left(\*K\diag(\*L\underline{\*w}+\*b)\*M\underline{\*x}\right)
\end{equation}
This lets us define skip-connections (set $\*K=\*M=\*I_{n^N}$, $\*L=\0_{n^N\times n^N}$, and $\*b=\1_{n^N}$) and average-pooling (set $\*K=\*F^{-1}$, $\*L=\0_{n^N\times n^N}$, $\*M=\*F$, and $\*b$ to be $\*F$ multiplied by a pooling filter).

Lastly, we use $\XD_{\alpha,\*b}^\1$ to construct multi-channel ``layers'' that pass multiple input features through multiple channels and re-combine them as multiple output features.
This follows the primary way of using convolutions in deep nets.
The key insight here is that we will share the same parameterizable operation (specified by $\alpha$ and $\*b$) across all channels, just as in convolutional layers.
\begin{Def}\label{def:xd}
	Let $a=(\alpha,\*b,\*C)$ be an architecture parameter containing a triple $\alpha=(\*K,\*L,\*M)$ of Kronecker products of $N$ K-matrices with depths $\*d\in\Z_+^3$, a bias $\*b\in\C^{n^N}$, and channel gates $\*C\in\C^{c\times c}$.\footnote{For simplicity we formalize the case where all $N$ dimensions have the same input size and there is an identical number $c$ of input and output channels; both are straightforward to extend.}
	Using ``$\bigoplus$'' to denote concatenation, the {\bf XD-operation} $\XD_a$ of depth $\*d$ specified by $a$ is a parameterizable operation on parameter space $\W=\R^{c\times c\times\*k}$ consisting of $c^2$ filters of size $\*k\in[n]^N$ that outputs parameterized functions on $\X=\R^{c\times m^N}$ for $m\le n$ mapping every $\*x\in\X$ to\vspace{-5pt}
	\begin{equation}
	\XD_a(\*w)(\*x)=\bigoplus_{i=1}^c\sum\limits_{j=1}^c\*C_{[i,j]}\XD_{\alpha,\*b}^\1(\*w_{[i,j]})(\*x_{[j]})
	\end{equation}
\end{Def}
The last architecture parameter $\*C$ allows interpolation between all-to-all layers ($\*C=\1_{c\times c}$), e.g. multi-channel convolutions, and layers where each channel is connected to one other channel ($\*C=\*I_c$), e.g. skip-connections and average-pooling.
We note that we use $\Search_\XD$ to describe the set of operations covered by Definition~\ref{def:xd} and conclude our construction by discussing two properties:
\begin{itemize}[leftmargin=*,topsep=-1pt,noitemsep]\setlength\itemsep{2pt}
	\item{\bf Kernel size:} the weight-space available to an XD-operation is $\R^{c\times c\times n^N}$; however, since we will initialize search with existing CNNs, we will zero-pad to have the same weight-space $\R^{c\times c\times k^N}$ as the convolutions with filter size $k\le n$ that they replace.
	This preserves the weight count but also means that if the backbone has $3\times3$ filters our search space will {\em not} contain $5\times5$ convolutions.
	Experimentally, we find that relaxing the constraint to allow this does not significantly affect results on image tasks, so we do not do so in subsequent applications to avoid increasing the weight count.
	\item{\bf Depth:} an XD-operation's depth is a triple describing the depths of its K-matrices $\*K$, $\*L$, and $\*M$.
	Increasing it trades off efficiency for expressivity;
	for example, in the next section we describe operations that we can show are contained in $\Search_\XD$ if $\*L$ or $\*M$ have depth $>1$.
	By default we will set the depth to be the minimum needed to initialize search with the backbone operation.
\end{itemize}

\paragraph{Expressivity of XD-Operations}
For many papers that replace deep net layers with efficient linear transforms \cite{moczulski2015acdc,dao2020kaleidoscope}, the question of expressivity comes down to the transform capacity.
For example, layers with a K-matrix in every channel can represent a different transform in each, thus allowing the output to be any combination of efficient linear operations.
Our case is less straightforward since we care about expressivity of the search space, not of parameterized functions, and our approach is less-expressive {\em by design} as all channels share K-matrices $\*K$, $\*L$, and $\*M$.
The latter can be thought of as a useful inductive bias on NAS:
the set of XD-operations is still much broader than the set of convolutions, but the way in which model weights are applied is the same across all channels.

Expressivity results are a way to see if this bias is useful or constraining.
Here we summarize some important operations that are 1d XD-operations; 
proofs can be found in the appendix and are straightforward to extend to multi-dimensional inputs.
Formally, there exists $\*d\in\Z_+^3$ such that the set of XD-operations of depth $\*d$ over weights $\W=\R^{c\times c\times k}$ and inputs $\X=\R^m$ for $m\le n$ contains
\begin{enumerate}[leftmargin=*,topsep=-1pt,noitemsep]\setlength\itemsep{2pt}
	\item convolutions with filter size $\le k$, dilation $\le\lfloor\frac{n-1}{k-1}\rfloor$, stride $\le n-1$, and arbitrary channel groups.
	\item parameter-free operations $\Id$, $\Zero$, and $\AvgP_s$ for any kernel size $s\le n$.
	\item composing 1 or 2 with multiplication of all input or output channels by a bounded-depth K-matrix.
\end{enumerate}
Note this does not account for {\em all} important XD-operations, e.g. we show in the appendix that they also express Fourier Neural Operators \citep{li2021fno} with $\le\lfloor k/2\rfloor$ modes and any transposed convolutions whose stride equals the dilated kernel size.\footnote{This restriction still includes transposed convolutions used in well-known architectures such as U-Net \citep{ronneberger2015unet}.}
Still, the first two items account for non-separable variants of most operations considered in past NAS work in computer vision, excluding the nonlinear $\MaxP$ \citep{ying2019nasbench101,dong2020nasbench201}.
Note depthwise-separable convolutions {\em are} contained in the set of compositions of XD-operations.
The third item implies that XD-operations can express the basic and diffusion graph convolutions over fixed graphs \citep{kipf2017gcn,li2018dcrnn}:
both are point-wise convolutions composed with sparse multiplication by a modified adjacency matrix, which K-matrices can represent efficiently.

As a concrete example, consider dilated convolutions, which for $k>1$ and dilation factor $d\ge1$ apply filters of effective size $(k-1)d+1$ with nonzero entries separated by $d-1$ zeros.
One could hope to express the application of $\DilC_{k,d}$ to an input $\*x\in\R^n$ in the single-channel setting as $\*F^{-1}\diag(\*F\diag(\*p_{k,d})\underline{\*w})\*F\*x$, where $\*p_{k,d}\in\{0,1\}^n$ zeroes out appropriate entries of $\underline{\*w}$, but this requires filter size $(k-1)d+1>k$, increasing the number of weights.
Instead, we can use a permutation $\*P_{k,d}\in\{0,1\}^{n\times n}$ before the DFT to place the $k$ entries of $\underline{\*w}$ into dilated positions:
\begin{equation}\label{eq:dilated}
	\DilC_{k,d}(\*w)(\*x)=\*F^{-1}\diag(\*F\*P_{k,d}\underline{\*w})\*F\*x
\end{equation}
As permutations are depth-2 K-matrices \citep{dao2020kaleidoscope}, we can express $\DilC_{k,d}$ with an XD-operation of depth $(1,3,1)$, with $\*K=\*F^{-1}$, $\*L=\*F\*P_{k,d}$, and $\*M=\*F$.

\vspace{-1mm}
\section{Finding and Evaluating XD-Operations}\label{sec:chrysalis}
\vspace{-1mm}

This section outlines a simple procedure that we use to evaluate XD-operations.
Recall that NAS methods specify architectures by assigning operations to each edge $(u,v,\Op)$ of a computational graph.
We aim to simultaneously find good operations and model weights, a goal distinct from the classic {\em two-stage} NAS formulation, which finds assignments in an initial search phase before training the resulting architecture from scratch \citep{ying2019nasbench101}.
However, the use of weight-sharing \citep{pham2018enas} extends NAS to {\em one-shot} objectives where weights and architectures are jointly optimized.
Under weight-sharing, architecture parameters become weights in a larger ``supernet,'' extending the hypothesis class \citep{li2021gaea}.

To assess XD-operations directly we assume the user provides a starter network with existing edge labels $\Op_{u,v}$ as a backbone.
We transform this into a weight-sharing supernet by reparameterizing each operation $\Op_{u,v}$ as an XD-operation $\XD_{a_{u,v}}$ with architecture parameter $a_{u,v}$.
Then we simultaneously train both $a_{u,v}$ and the model weights $\*w_{u,v}$ associated with each edge as follows:
\begin{itemize}[leftmargin=*,topsep=-1pt,noitemsep]\setlength\itemsep{2pt}
	\item {\bf Architecture parameters} $a_{u,v}$ are initialized using the original operation used by the CNN backbone by setting $\Op_{u,v}=\XD_{a_{u,v}}$;
	$a_{u,v}$ is then updated via SGD or Adam \citep{kingma2015adam}.
	We tune step-size, momentum, and the number of ``warmup'' epochs: 
	initial epochs during which only model weights $\*w_{u,v}$ are updated.
	This can be viewed as a specialized step-size schedule.
	\item {\bf Model weights} $\*w_{u,v}$ are initialized and updated using the routine provided with the backbone.
\end{itemize}


\vspace{10mm}
This approach allows us to use established topologies and optimizers while searching for new operations, thus aligning with the goal for Sections~\ref{sec:pde}, \ref{sec:protein}, and~\ref{sec:seq}: 
to improve upon the CNN backbones that practitioners often use as a first attempt.
As a simple example, we start by applying the procedure to image classification.
Since this is not the main objective of our work, we treat it as a warmup and consider two datasets:
CIFAR-10 and a variant where the images' rows and columns are permuted.
On CIFAR-10 we do {\em not} expect to see much improvement from XD-operations over the CNN backbone used to initialize search, as convolutions are already the ``right'' operation for images.
On the other hand,  the ``right'' operation on permuted data, at least in layer one, is an inverse permutation followed by convolution;
as this is an XD-operation\footnote{Recall $\Search_\XD$ includes compositions of convolutions with multiplication by a K-matrix, e.g. a permutation.}, here we do hope to see improvement.

Using LeNet \citep{lecun1999lenet} and ResNet-20 \citep{he2016resnet} as backbones, we compare applying our algorithm to XD-operations with two baselines:
(1) using just the backbone CNN and (2) applying a similar method to the relaxed set $\tilde\Search_\disc$ of DARTS operations from Section~\ref{sec:relax}.
To optimize over $\tilde\Search_\disc$ we take an approach similar to DARTS:
parameterize the simplex using a softmax and apply Adam.
We experiment with both a uniform initialization and one biased towards the backbone's operation.
While both $\Search_\XD$ and $\Search_\disc$ contain LeNet's $\Conv_{5\times 5}$ and ResNet's $\Conv_{3\times 3}$ and $\Id$, for LeNet's $\MaxP_{3\times 3}$ layer we initialize with the closest operation.
For direct comparison, both search spaces employ weights with maximum filter size $5\times 5$ and for both we evaluate the shared weights rather than retraining, which we find hurts $\tilde\Search_\disc$.
We set the XD-operations' depth to $\*d=\*3_3$ to express the dilated convolutions in $\Search_\disc$ and convolutions composed with permutations.

In Table~\ref{tab:cifar}, we see that while both the relaxed discrete NAS operations and XD-operations perform comparably on regular images, XD-operations achieve around 15\% better accuracy with both backbones when the images are permuted.\footnote{Full accuracy can be recovered via an auxiliary loss encouraging permutation-like K-matrices \citep{dao2020kaleidoscope}.}
Note that even networks obtained by running state-of-the-art NAS procedures such as GAEA PC-DARTS \citep{li2021gaea} and DenseNAS \citep{fang2020densenas} on permuted CIFAR-10 achieve only 66.3\% and 61.6\% accuracy, respectively, despite using millions more parameters than ResNet-20.
While it is not straightforward to understand the recovered XD-operations that perform so well, we can use the relative Euclidean distance of their architecture parameters from initialization as a proxy for novelty;
in Figure~\ref{fig:div} we see that on regular images our procedure finds operations that are quite similar to convolutions, but on permuted data they are much further away.
These results show that to enable NAS on diverse data, we will need a search space that contains truly novel operations, not just combinations of existing ones.
In the remainder of the paper, we study more diverse and realistic tasks that show further evidence that $\Search_\XD$ is a strong candidate for this.

\begin{figure}[!t]
	\begin{minipage}{0.6\linewidth}
		\begin{threeparttable}[H]
			\captionof{table}{\label{tab:cifar}
				Search space comparison on CIFAR-10.
				Validation accuracies are averages of three trials.
				While we use small CNNs for exploration, XD-operations can also be used with high-performance backbones to obtain $>95\%$ accuracy (c.f. the appendix).
				\vspace{-4pt}
			}
			\begin{tabular}{lccc}
				\toprule
				{\bf Backbone} & & Permuted & Cost \\
				~~search space & CIFAR-10 & CIFAR-10$^\ast$ & (hours$^\dagger$) \\
				\midrule
				{\bf LeNet} & $75.5\pm0.1$ & $43.7\pm0.5$ & 0.3 \\
				~~$\tilde\Search_\disc$ & $75.6\pm3.4$ & $47.7\pm1.0$ & 1.0 \\
				~~$\Search_\XD$ & $77.7\pm0.7$ & $63.0\pm1.0$ & 0.9 \\
				\midrule
				{\bf ResNet-20} & $91.7\pm0.2$ & $58.6\pm0.7$ & 0.6 \\
				~~$\tilde\Search_\disc$ & $92.7\pm0.2$ & $58.0\pm1.0$ & 5.3 \\
				~~$\Search_\XD$ & $92.4\pm0.2$ & $73.5\pm1.6$ & 5.6 \\
				\bottomrule
			\end{tabular}
			\begin{tablenotes}\footnotesize
				\item[$\ast$] No data augmentation used in the permuted case.
			\end{tablenotes}
		\end{threeparttable}
	\end{minipage}
	\hfill
	\begin{minipage}{0.4\linewidth}
		\centering
		\includegraphics[width=\linewidth]{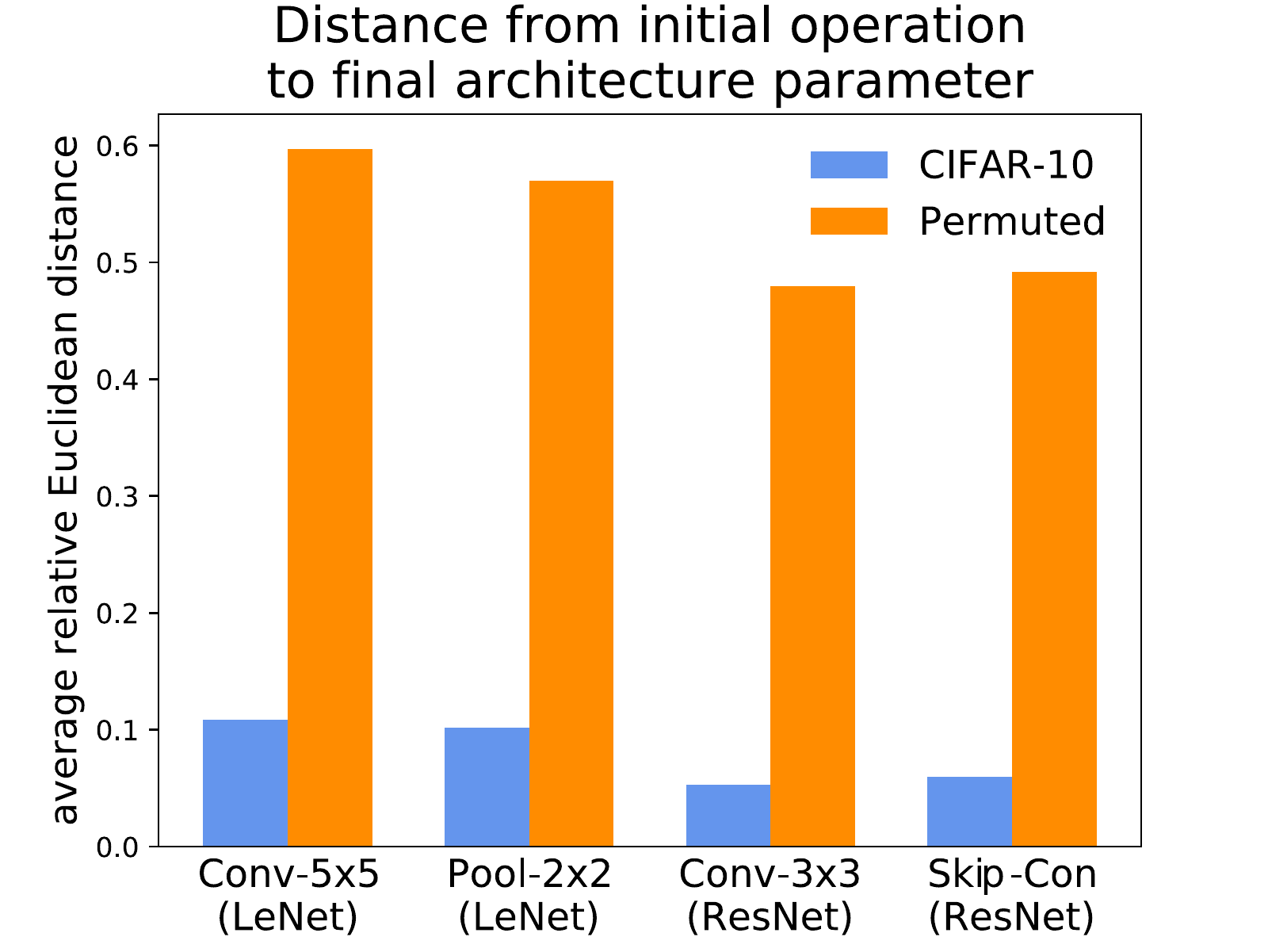}
		\vspace{-3pt}
		\caption{\label{fig:div}
			On permuted images, where convolutions are not the ``right'' operation, we find XD-operations that are farther away from the operations of the initial CNN backbone.
		}
		
	\end{minipage}
\end{figure}

\begin{figure}[!t]
	\begin{minipage}{\linewidth}
		\centering
		\hspace{8mm}
		\includegraphics[width=0.4\linewidth]{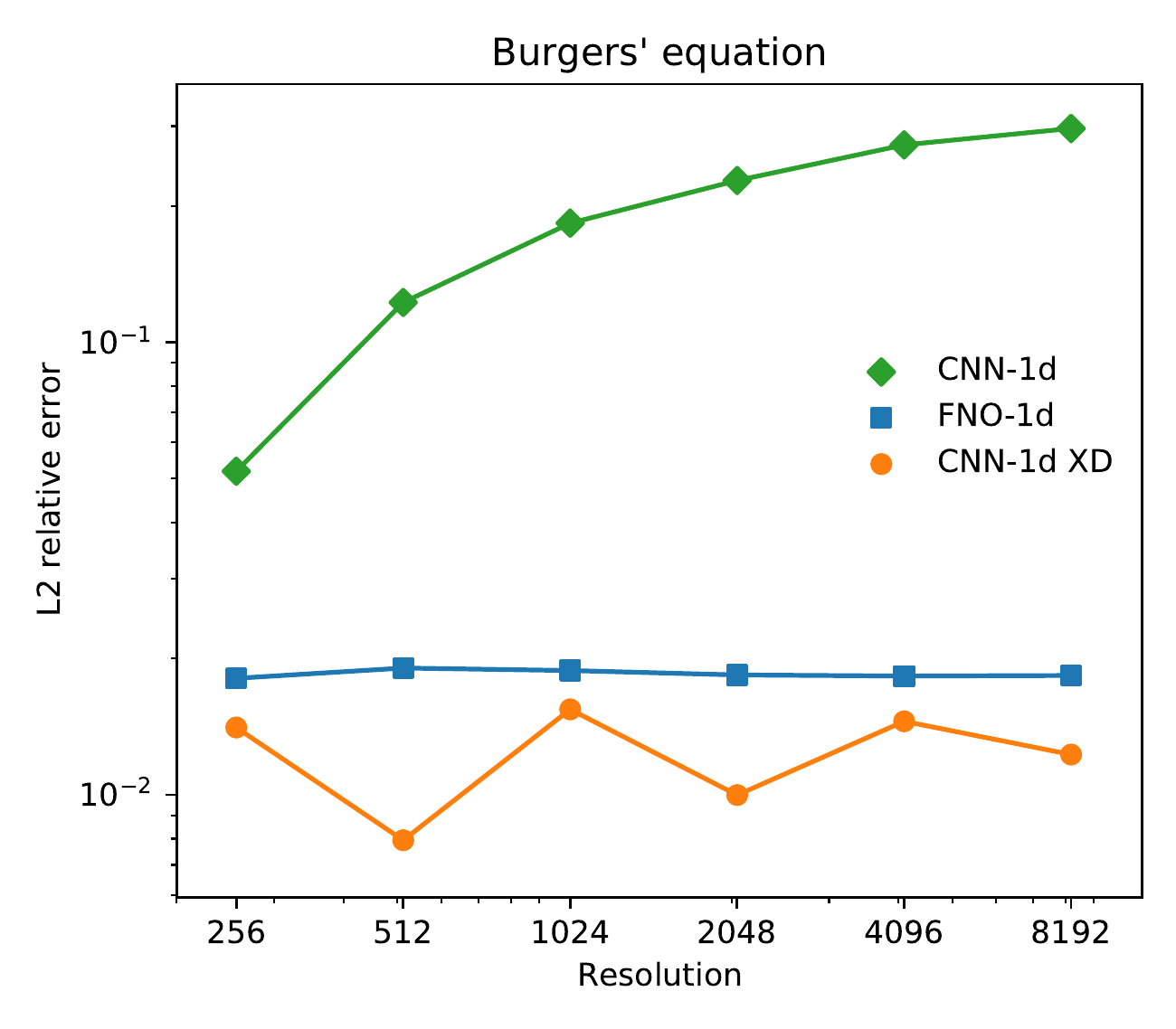}
		\hfill
		\includegraphics[width=0.4\linewidth]{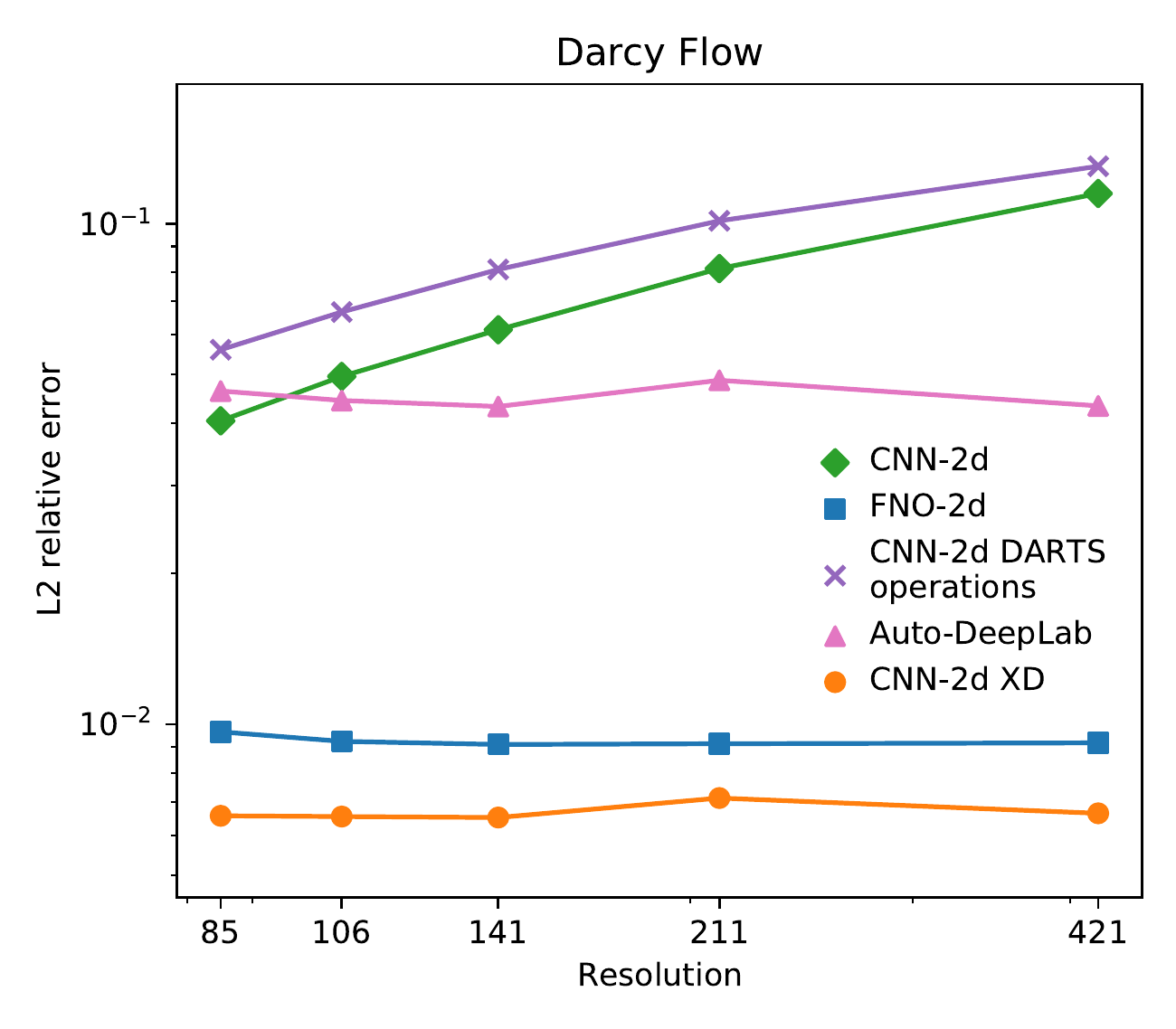}
		\hspace{8mm}
		\vspace{-9pt}
		\caption{
			Relative error on Burgers' equation (left) and Darcy Flow (right) across different resolutions.}
		\vspace{-8pt}
		\label{fig:pde}
	\end{minipage}
\end{figure}

\vspace{-1mm}
\vspace{10mm}
\section{Application: Learning to Solve Partial Differential Equations}\label{sec:pde}
\vspace{-1mm}

As our first non-vision application, we consider the task of solving PDEs, an important application area of ML in the natural sciences \citep{li2015butterfly, li2018multidimensional, sirignano2018dgm}.
In our setup, data generated by classical PDE solvers is used to learn functions from some initial condition or setting to the corresponding PDE solution, with the goal of replacing the solver by a deep net forward pass;
the latter can be orders of magnitude faster.
A recent state-of-the-art approach for this introduces Fourier Neural Operators \citep{li2021fno}, operations that significantly improve upon previous neural approaches across three different PDE settings.
To evaluate the ability of XD-operations to compete with such custom-designed operations starting from simple CNN backbones, we will investigate the same three PDEs that they study: Burgers' equation, Darcy Flow, and the 2d Navier-Stokes equations, which involve 1d, 2d, and 3d data, respectively.
The first two are studied across multiple resolutions, while the last one is studied at different viscosities.


As before, we start with a simple CNN backbone---the type a scientist might use in a first attempt at a solution---and replace all convolutions by XD-operations.
We initially hope to do better than this backbone, but ambitiously also hope to compete with the custom-designed FNO.
The specific CNN we use is simply the FNO architecture of the appropriate dimension $N$ but with all $N$-dimensional FNOs replaced by $N$-dimensional convolutions;
this performs similarly to their CNN baselines \cite{li2021fno}.
In all cases we compare mainly to the CNN backbone and our reproduction of the FNO results, as the latter exceeds all other neural methods;
a complete results table is provided in the appendix.
Our reproduction of FNO is slightly worse than their reported numbers for Burgers' equation and slightly better in the other two settings.
Note that on the Navier-Stokes equations we only compare to the 3d FNO on the two settings in which we were able to reproduce their approach;
moreover, we do {\em not} compare to their use of a 2d FNO plus a recurrent net in time, but in-principle XD-operations can also be substituted there.
In the 2d Darcy Flow case we also include comparisons to DARTS operations in the simple CNN backbone, as in Section~\ref{sec:chrysalis}, and to Auto-DeepLab (AutoDL) \citep{liu2019autodl}, a well-known NAS method for dense prediction.
For evaluating XD-operations we again follow the procedure in Section~\ref{sec:chrysalis}, in which we tune only the architecture optimizer;
notably, we do this only at the lowest resolutions.
At all dimensions we use XD-operations of depth $\*d=\1_3$;
in addition, in dimensions $N>1$ we fix the architecture biases $\*b$ and channel gates $\*C$ to $\0$ and $\1$, respectively, to conserve memory at higher resolutions.
At lower ones we find that the performance difference is negligible.

We report our results for the Burger's equation and Darcy Flow in Figure~\ref{fig:pde};
for 2d Navier-Stokes the results are in Table~\ref{tab:navierstokes}.
In all cases we dramatically outperform the CNN backbone used to initialize XD-operations;
furthermore, we also achieve better error than FNO, despite it being custom-made for this problem.
In particular, we find that XD-operations have higher {\em training error} but generalize better (c.f. the appendix).
Figure~\ref{fig:pde} also shows that XD-operations perform consistently well across resolutions, a major advantage of FNOs over previous methods, whose performance was tightly coupled to the discretization \citep{li2021fno}.
Notably, CNN performance worsens with higher resolution, unlike that of XD and FNO.
Finally, we also substantially outperform DARTS operations and AutoDL in 2d, although the latter is at least consistent across resolutions.
These results provide strong evidence that XD-operations are a useful search space for discovering neural operations, even in domains where the convolutions used to initialize them perform much worse than state-of-the-art.
Note that these results do come at a cost of slower training and inference:
XD-operations are roughly an order of magnitude slower than FNOs, despite having fewer parameters in 2d and 3d.
This still yields solvers one-to-two orders of magnitude faster than classical solvers, maintaining usefulness for the problem.

\begin{figure*}[!t]
	\begin{minipage}{\linewidth}
		\centering
		\begin{threeparttable}
			\captionof{table}{\label{tab:navierstokes}
				Relative test error on the 2d Navier-Stokes equations at  different settings of the viscosity $\nu$ and time steps $T$. 
				Best results in each setting are {\bf bolded}.
			}
			\begin{tabular}{lcccccc}
				\toprule
				&&& $\nu=10^{-4}$, $T=30$ &&& $\nu=10^{-5}$, $T=20$ \\
				\midrule
				CNN-3d (our baseline)              &&&    0.325    &&&  0.278    \\
				FNO-3d (reproduced)                           &&&   0.182     &&&  0.177   \\
				{\bf CNN-3d XD}  (ours) &&&     {\bf 0.172}  &&&  {\bf 0.168}     \\
				\bottomrule
			\end{tabular}
		\end{threeparttable}
	\end{minipage}
\end{figure*}

\begin{figure*}[!t]
	\begin{minipage}{\linewidth}
		\centering
		\includegraphics[width=\columnwidth]{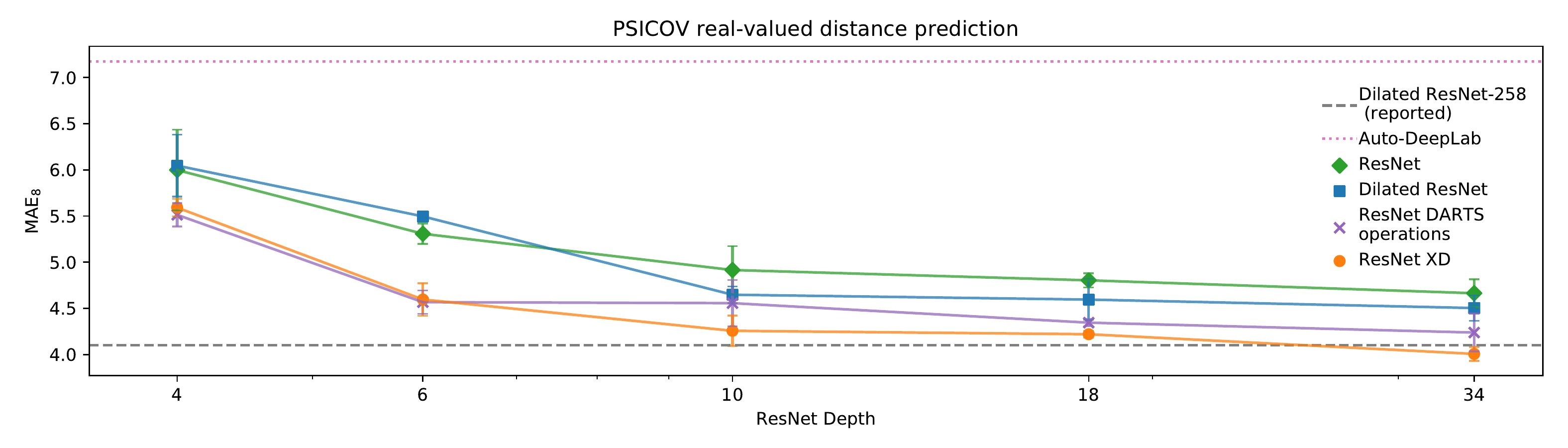}
		\vspace{-9pt}
		\caption{
			ResNet XD outperforms both baseline and dilated ResNets on PSICOV.
			At the highest depth we test we also outperform the reported MAE$_8$ of the much deeper Dilated ResNet-258~\citep{adhikari2020}.
		}
		\label{fig:protein}
	\end{minipage}
\end{figure*}

\vspace{-1mm}
\section{Application: Real-Valued Distance Prediction for Protein Folding}\label{sec:protein}
\vspace{-1mm}

As a second scientific application, we consider the task of inferring the 3d ``folded'' structure of a polypeptide chain, which yields important insights into the function of the resulting protein~\cite{psicov}. This problem is a high-priority challenge in biology and has recently seen significant ML-driven advances from deep learning methods such as AlphaFold~\cite{alphafold, alphafold2} and PDNET~\cite{adhikari2020}.
These typically involve training a network to predict pairwise physical distances between residues in the chain.
We work with the PDNET benchmark, which consists of a training set of 3,356 proteins, a validation set of 100 of proteins, and the PSICOV \cite{psicov} test set of 150 proteins. 
PDNET is designed to be more accessible than datasets used by large-scale methods such as AlphaFold, which are not always publicly available and/or require massive compute \cite{alphafold, alphafold2}.
We follow the PDNET training procedure~\cite{adhikari2020} and evaluate test set performance using their MAE$_8$ metric for assessing long-range distances. 

As before we start with simple CNN backbones---in this case ResNets.
We choose this to compare most directly to the custom-designed architecture used by PDNET, consisting of a Dilated ResNet characterized by its use of a cyclically increasing dilation rate across ResNet blocks \cite{adhikari2020}.
At a sufficient depth, the Dilated ResNet is shown to outperform a standard pre-activation ResNet adapted to this task~\cite{adhikari2020}. Our goal will be to see whether we can start with the vanilla ResNet and use XD to outperform both it and the specialized Dilated ResNet. 
We also aim to outperform the DARTS operations baseline from the previous two sections as well as the AutoDL NAS approach for dense prediction.
We use XD-operations of depth $\*d=\1_3$ and fix the architecture biases and channel gates as before to conserve memory.
We evaluate architectures of different depths---4, 6, 10, 18, and 34---by varying the number of ResNet blocks used in the backbone architecture and baseline. 

We report the results as averages across three trials for each depth in Figure~\ref{fig:protein}. 
Notably, while Dilated ResNet slightly outperforms ResNet, ResNet XD outperforms both dilated and standard ResNets at all depths. This provides further evidence that XD-operations can outperform specialized operations for diverse domains, even when initialized naively as standard convolutions. 
XD also outperforms AutoDL, which does poorly, and DARTS operations, except at the two smaller depths where performance is similar.
Moreover, our ResNet-34 XD's MAE$_8$ of 4.0 also improves upon PDNET's reported MAE$_8$ of 4.1 attained by the much deeper Dilated ResNet-258~\cite{adhikari2020};
however, in our reproduction Dilated ResNet-258 achieved an MAE$_8$ of $3.5$.
Given the trend in Figure~\ref{fig:protein}, where XD-operations consistently improve the backbone architecture of the same depth, we conjecture that ResNet-258 XD could further improve upon this result.
We leave scaling XD-operations to such deeper networks to future work. 

\begin{figure*}[!t]
	\begin{minipage}{\linewidth}
		\centering
		\begin{threeparttable}
			\captionof{table}{\label{tab:seq}
				XD-operations compared to recent results in music modeling.
				We report average loss across three trials.
				The best result on each task is {\bf bolded}.
			}
			\begin{tabular}{lcccccccc}
				\toprule
				Method (source) &&&& JSB Chorales &&&& Nottingham \\
				\midrule
				Best recurrent \citep{bai2018tcn} &&&& 8.43 &&&& 3.29  \\
				TCN \citep{bai2018tcn} &&&& 8.10 &&&& 3.07 \\
				Transformer \citep{wang2020rtransformer} &&&& - &&&& 3.34 \\
				R-Transformer \citep{wang2020rtransformer} &&&& - &&&& {\bf 2.37} \\
				\midrule
				Undilated TCN (our baseline) &&&& $8.16\pm0.04$ &&&& $3.23\pm0.02$ \\
				TCN (reproduced) &&&& $8.17\pm0.01$ &&&& $2.97\pm0.01$ \\
				{\bf Undilated TCN XD} (ours) &&&& ${\bf 8.07\pm0.01}$ &&&& $2.84\pm0.02$ \\
				\bottomrule
			\end{tabular}
		\end{threeparttable}
	\end{minipage}
\end{figure*}

\vspace{-1mm}
\section{Application: Music Modeling}\label{sec:seq}
\vspace{-1mm}


Our final application is to music modeling, i.e. learning to predict the next note from sheet music~\citep{allan2005chorales}.
The dominant approaches for such tasks are recurrent nets~\citep{hochreiter1997lstm} and Transformers~\cite{vaswani2017attention}, but recent work has shown that specially-designed convolutional models can also be made competitive at similar model sizes \citep{bai2018tcn,bai2019trellis}.
We will consider the temporal convolutional network (TCN) \cite{bai2018tcn}, which improves upon a regular CNN by having the dilation factor grow exponentially across layers.
The tasks we study are on the JSB Chorales and Nottingham corpora, used in the original evaluation of TCNs \citep{bai2018tcn}.
As the baseline we take the TCN and set all dilation factors to one (undilated);
our goal will be to start with this undilated network and match or outperform the custom dilation design of the TCN.


The results presented in Table~\ref{tab:seq} show that we achieve this goal, as we outperform both the undilated baseline and the TCN on both tasks.
While the simple undilated backbone that we initialize with turns out to already match the TCN on JSB Chorales, on Nottingham our approach demonstrates that XD-operations can be used to outperform hand-designed architectures starting from vanilla CNNs.\footnote{In the appendix we report similar improvements on two other tasks on which TCNs were evaluated---
permuted MNIST and Penn TreeBank---that we do not discuss in detail as our focus is on under-explored tasks.}
Where possible we also compare to other known results;
XD-operations outperforms all of these except the R-Transformer \citep{wang2020rtransformer}, a model combining recurrent nets and self-attention, on Nottingham.


Together with our results on PDEs and proteins, our study of music modeling provides further evidence that XD-operations can effectively find good operations using standard backbones on diverse tasks.
One notable difficulty here is causality enforcement:
making sure the input data does not contain the target when predicting the next entry.
While TCNs can efficiently do so via temporal shifts, we do it in a brute-force manner by treating sequences of length $n$ as $n-1$ data-points with masked targets.
This is expensive and thus limits our evaluation to small music tasks.
A fruitful direction for future work is thus to examine whether it is possibly to directly enforce causality in XD-operations, e.g. by forcing architecture parameters $\*K$ and $\*M$ to be lower triangular;
since a product of lower triangular matrices is again lower triangular, the entire operation is then a multiplication of the input sequence by a lower triangular matrix, which suffices to prevent causality violations.

\vspace{-1mm}
\section{Conclusion}\label{sec:conc}
\vspace{-1mm}

This work aims to transition NAS from combining existing operations designed for vision and text to finding novel and effective operations in many domains.
To do so we introduced a new search space of XD-operations and demonstrated its effectiveness on diverse tasks.
Combining XD-operations with standard topology-search NAS, warm-starting search from non-standard operations such as graph convolutions and FNOs,\footnote{
	In this direction, we found that initializing XD with FNO did {\em worse} than initializing with convolutions on Burgers' equation and Darcy Flow, a surprising result given how much better FNO is than the baseline CNN. 
	Similarly, initializing XD with convolutions dilated as in the original TCN did not lead to significant improvement, except in one setting, over undilated initialization.
	See the appendix for more details and results.
}
improving the computational limitations described earlier, and constructing spaces containing missing operations such as BatchNorm \citep{ioffe2015batchnorm} and self-attention \citep{vaswani2017attention} are all promising future directions.
Finally, note that our goal---lowering the barrier for applying ML---necessarily comes with the possibility of misuse.
Mitigating this involves developing tools for application-specific concerns, e.g. privacy and fairness, that go beyond the error metrics we target.

\vspace{-1mm}
\section*{Acknowledgments}
\vspace{-1mm}

We thank Maria-Florina Balcan, Jeremy Cohen, and Tian Li for helpful advice on early versions of this paper and anonymous reviewers for suggested improvements.
This work was supported in part by DARPA under cooperative agreements FA875017C0141 and HR0011202000, NSF grants CCF-1535967, CCF-1910321, IIS-1618714, IIS-1705121, IIS-1838017, IIS-1901403, and IIS-2046613, a Microsoft Research Faculty Fellowship, a Bloomberg Data Science research grant, an Amazon Research Award, an AWS Machine Learning Research Award, a Facebook Faculty Research Award, funding from Booz Allen Hamilton Inc., a Block Center Grant, a Carnegie Bosch Institute Research Award, and a Two Sigma Fellowship Award.
We also gratefully acknowledge the support of NIH under No.\ U54EB020405 (Mobilize), NSF under Nos.\ CCF1763315 (Beyond Sparsity), CCF1563078 (Volume to Velocity), and 1937301 (RTML); ONR under No.\ N000141712266 (Unifying Weak Supervision); the Moore Foundation, NXP, Xilinx, LETI-CEA, Intel, IBM, Microsoft, NEC, Toshiba, TSMC, ARM, Hitachi, BASF, Accenture, Ericsson, Qualcomm, Analog Devices, the Okawa Foundation, American Family Insurance, Google Cloud, Swiss Re, Total, the HAI-AWS Cloud Credits for Research program, the Stanford Data Science Initiative (SDSI), and members of the Stanford DAWN project: Facebook, Google, and VMWare.
The Mobilize Center is a Biomedical Technology Resource Center, funded by the NIH National Institute of Biomedical Imaging and Bioengineering through Grant P41EB027060.
The U.S.\ Government is authorized to reproduce and distribute reprints for Governmental purposes notwithstanding any copyright notation thereon.
Any opinions, findings and conclusions, or recommendations expressed in this material are those of the authors and do not necessarily reflect the views of DARPA, NSF, NIH, ONR, or any other funding agency.

\bibliography{refs}

\begin{thebibliography}{50}
\providecommand{\natexlab}[1]{#1}
\providecommand{\url}[1]{\texttt{#1}}
\expandafter\ifx\csname urlstyle\endcsname\relax
  \providecommand{\doi}[1]{doi: #1}\else
  \providecommand{\doi}{doi: \begingroup \urlstyle{rm}\Url}\fi

\bibitem[Adhikari(2020)]{adhikari2020}
Badri Adhikari.
\newblock A fully open-source framework for deep learning protein real-valued
  distances.
\newblock \emph{Scientific Reports}, 10\penalty0 (1):\penalty0 13374, 2020.
\newblock \doi{10.1038/s41598-020-70181-0}.
\newblock URL \url{https://doi.org/10.1038/s41598-020-70181-0}.

\bibitem[Ailon et~al.(2020)Ailon, Leibovich, and Nair]{ailon2020butterfly}
Nir Ailon, Omer Leibovich, and Vineet Nair.
\newblock Sparse linear networks with a fixed butterfly structure: Theory and
  practice.
\newblock arXiv, 2020.

\bibitem[{Alizadeh vahid} et~al.(2020){Alizadeh vahid}, Prabhu, Farhadi, and
  Rastegari]{alizadeh2020butterfly}
Keivan {Alizadeh vahid}, Anish Prabhu, Ali Farhadi, and Mohammad Rastegari.
\newblock Butterfly transform: An efficient {FFT} based neural architecture
  design.
\newblock In \emph{Proceedings of the IEEE Conference on Conference on Computer
  Vision and Pattern Recognition}, 2020.

\bibitem[Allan and Williams(2005)]{allan2005chorales}
Moray Allan and Christopher Williams.
\newblock Harmonising chorales by probabilistic inference.
\newblock In \emph{Advances in Neural Information Processing Systems}, 2005.

\bibitem[Bai et~al.(2018)Bai, Kolter, and Koltun]{bai2018tcn}
Shaojie Bai, J.~Zico Kolter, and Vladlen Koltun.
\newblock An empirical evaluation of generic convolutional and recurrent
  networks for sequence modeling.
\newblock arXiv, 2018.

\bibitem[Bai et~al.(2019)Bai, Kolter, and Koltun]{bai2019trellis}
Shaojie Bai, J.~Zico Kolter, and Vladlen Koltun.
\newblock Trellis networks for sequence modeling.
\newblock In \emph{Proceedings of the 7th International Conference on Learning
  Representations}, 2019.

\bibitem[Bergstra and Bengio(2012)]{bergstra2012rs}
James Bergstra and Yoshua Bengio.
\newblock Random search for hyper-parameter optimization.
\newblock \emph{Journal of Machine Learning Research}, 13:\penalty0 281--305,
  2012.

\bibitem[Cai et~al.(2020)Cai, Gan, Wang, Zhang, and Han]{cai2020ofa}
Han Cai, Chuang Gan, Tianzhe Wang, Zhekai Zhang, and Song Han.
\newblock Once-for-all: Train one network and specialize it for efficient
  deployment.
\newblock In \emph{Proceedings of the 8th International Conference on Learning
  Representations}, 2020.

\bibitem[Dao et~al.(2019)Dao, Gu, Eichhorn, Rudra, and
  R\'{e}]{dao2019butterfly}
Tri Dao, Albert Gu, Matthew Eichhorn, Atri Rudra, and Christopher R\'{e}.
\newblock Learning fast algorithms for linear transforms using butterfly
  factorizations.
\newblock In \emph{Proceedings of the 36th International Conference on Machine
  Learning}, 2019.

\bibitem[Dao et~al.(2020)Dao, Sohoni, Gu, Eichhorn, Blonder, Leszczynski,
  Rudra, and R\'{e}]{dao2020kaleidoscope}
Tri Dao, Nimit Sohoni, Albert Gu, Matthew Eichhorn, Amit Blonder, Megan
  Leszczynski, Atri Rudra, and Christopher R\'{e}.
\newblock Kaleidoscope: An efficient, learnable representation for all
  structured linear maps.
\newblock In \emph{Proceedings of the 8th International Conference on Learning
  Representations}, 2020.

\bibitem[Dong and Yang(2020)]{dong2020nasbench201}
Xuanyi Dong and Yi~Yang.
\newblock {NAS-Bench-201}: Extending the scope of reproducible neural
  architecture search.
\newblock In \emph{Proceedings of the 8th International Conference on Learning
  Representations}, 2020.

\bibitem[Elsken et~al.(2019)Elsken, Metzen, and Hutter]{elsken2019nas}
Thomas Elsken, Jan~Hendrik Metzen, and Frank Hutter.
\newblock Neural architecture search: A survey.
\newblock \emph{Journal of Machine Learning Research}, 20\penalty0
  (55):\penalty0 1--21, 2019.

\bibitem[Fang et~al.(2020)Fang, Sun, Zhang, Li, Liu, and
  Wang]{fang2020densenas}
Jiemin Fang, Yuzhu Sun, Qian Zhang, Yuan Li, Wenyu Liu, and Xinggang Wang.
\newblock Densely connected search space for more flexible neural architecture
  search.
\newblock In \emph{Proceedings of the IEEE Conference on Computer Vision and
  Pattern Recognition}, 2020.

\bibitem[Gu et~al.(2020)Gu, Dao, Ermon, Rudra, and R\'{e}]{gu2020hippo}
Albert Gu, Tri Dao, Stefano Ermon, Atri Rudra, and Christopher R\'{e}.
\newblock {HiPPO}: Recurrent memory with optimal polynomial projections.
\newblock In \emph{Advances in Neural Information Processing Systems}, 2020.

\bibitem[He et~al.(2016)He, Zhang, Ren, and Sun]{he2016resnet}
Kaiming He, Xiangyu Zhang, Shaoqing Ren, and Jian Sun.
\newblock Deep residual learning for image recognition.
\newblock In \emph{Proceedings of the IEEE Conference on Computer Vision and
  Pattern Recognition}, 2016.

\bibitem[Hochreiter and Schmidhuber(1997)]{hochreiter1997lstm}
Sepp Hochreiter and J\"{u}rgen Schmidhuber.
\newblock Long short-term memory.
\newblock \emph{Neural Computation}, 9:\penalty0 1735--1780, 1997.

\bibitem[Ioffe and Szegedy(2015)]{ioffe2015batchnorm}
Sergey Ioffe and Christian Szegedy.
\newblock Batch normalization: Accelerating deep network training by reducing
  internal covariate shift.
\newblock In \emph{Proceedings of the 32nd International Conference on Machine
  Learning}, 2015.

\bibitem[Jones et~al.(2011)Jones, Buchan, Cozzetto, and Pontil]{psicov}
David~T. Jones, Daniel W.~A. Buchan, Domenico Cozzetto, and Massimiliano
  Pontil.
\newblock {PSICOV: precise structural contact prediction using sparse inverse
  covariance estimation on large multiple sequence alignments}.
\newblock \emph{Bioinformatics}, 28\penalty0 (2):\penalty0 184--190, 11 2011.
\newblock ISSN 1367-4803.
\newblock \doi{10.1093/bioinformatics/btr638}.
\newblock URL \url{https://doi.org/10.1093/bioinformatics/btr638}.

\bibitem[Jumper et~al.(2021)Jumper, Evans, Pritzel, Green, Figurnov,
  Ronneberger, Tunyasuvunakool, Bates, {\v Z}{\'\i}dek, Potapenko, Bridgland,
  Meyer, Kohl, Ballard, Cowie, Romera-Paredes, Nikolov, Jain, Adler, Back,
  Petersen, Reiman, Clancy, Zielinski, Steinegger, Pacholska, Berghammer,
  Bodenstein, Silver, Vinyals, Senior, Kavukcuoglu, Kohli, and
  Hassabis]{alphafold2}
John Jumper, Richard Evans, Alexander Pritzel, Tim Green, Michael Figurnov,
  Olaf Ronneberger, Kathryn Tunyasuvunakool, Russ Bates, Augustin {\v
  Z}{\'\i}dek, Anna Potapenko, Alex Bridgland, Clemens Meyer, Simon A.~A. Kohl,
  Andrew~J. Ballard, Andrew Cowie, Bernardino Romera-Paredes, Stanislav
  Nikolov, Rishub Jain, Jonas Adler, Trevor Back, Stig Petersen, David Reiman,
  Ellen Clancy, Michal Zielinski, Martin Steinegger, Michalina Pacholska, Tamas
  Berghammer, Sebastian Bodenstein, David Silver, Oriol Vinyals, Andrew~W.
  Senior, Koray Kavukcuoglu, Pushmeet Kohli, and Demis Hassabis.
\newblock Highly accurate protein structure prediction with alphafold.
\newblock \emph{Nature}, 596\penalty0 (7873):\penalty0 583--589, 2021.
\newblock \doi{10.1038/s41586-021-03819-2}.
\newblock URL \url{https://doi.org/10.1038/s41586-021-03819-2}.

\bibitem[Kingma and Ba(2015)]{kingma2015adam}
Diederik~P. Kingma and Jimmy Ba.
\newblock Adam: A method for stochastic optimization.
\newblock In \emph{Proceedings of the 3rd International Conference on Learning
  Representations}, 2015.

\bibitem[Kipf and Welling(2017)]{kipf2017gcn}
Thomas~N. Kipf and Max Welling.
\newblock Semi-supervised classification with graph convolutional networks.
\newblock In \emph{Proceedings of the 5th International Conference on Learning
  Representations}, 2017.

\bibitem[Krizhevksy(2009)]{krizhevsky2009cifar}
Alex Krizhevksy.
\newblock Learning multiple layers of features from tiny images.
\newblock Technical report, 2009.

\bibitem[LeCun et~al.(1999)LeCun, Haffner, Bottou, and Bengio]{lecun1999lenet}
Yann LeCun, Patrick Haffner, L\'{e}on Bottou, and Yoshua Bengio.
\newblock Object recognition with gradient-based learning.
\newblock In \emph{Shape, Contour and Grouping in Computer Vision}. 1999.

\bibitem[Li and Talwalkar(2019)]{li2019rsws}
Liam Li and Ameet Talwalkar.
\newblock Random search and reproducibility for neural architecture search.
\newblock In \emph{Proceedings of the Conference on Uncertainty in Artificial
  Intelligence}, 2019.

\bibitem[Li et~al.(2018{\natexlab{a}})Li, Jamieson, DeSalvo, Rostamizadeh, and
  Talwalkar]{li2018hyperband}
Liam Li, Kevin Jamieson, Giulia DeSalvo, Afshin Rostamizadeh, and Ameet
  Talwalkar.
\newblock Hyperband: A novel bandit-based approach to hyperparameter
  optimization.
\newblock \emph{Journal of Machine Learning Research}, 18\penalty0
  (185):\penalty0 1--52, 2018{\natexlab{a}}.

\bibitem[Li et~al.(2021{\natexlab{a}})Li, Khodak, Balcan, and
  Talwalkar]{li2021gaea}
Liam Li, Mikhail Khodak, Maria-Florina Balcan, and Ameet Talwalkar.
\newblock Geometry-aware gradient algorithms for neural architecture search.
\newblock In \emph{Proceedings of the 9th International Conference on Learning
  Representations}, 2021{\natexlab{a}}.
\newblock To Appear.

\bibitem[Li et~al.(2018{\natexlab{b}})Li, Yu, Shahabi, and Liu]{li2018dcrnn}
Yaguang Li, Rose Yu, Cyrus Shahabi, and Yan Liu.
\newblock Diffusion convolutional recurrent neural network: Data-driven traffic
  forecasting.
\newblock In \emph{Proceedings of the 6th International Conference on Learning
  Representations}, 2018{\natexlab{b}}.

\bibitem[Li et~al.(2015)Li, Yang, Martin, Ho, and Ying]{li2015butterfly}
Yingzhou Li, Haizhao Yang, Eileen~R. Martin, Kenneth~L. Ho, and Lexing Ying.
\newblock Butterfly factorization.
\newblock \emph{Multiscale Modeling \& Simulation}, 13\penalty0 (2):\penalty0
  714--732, 2015.

\bibitem[Li et~al.(2018{\natexlab{c}})Li, Yang, and
  Ying]{li2018multidimensional}
Yingzhou Li, Haizhao Yang, and Lexing Ying.
\newblock Multidimensional butterfly factorization.
\newblock \emph{Applied and Computational Harmonic Analysis}, 44\penalty0
  (3):\penalty0 737--758, 2018{\natexlab{c}}.

\bibitem[Li et~al.(2021{\natexlab{b}})Li, Kovachki, Azizzadenesheli, Liu,
  Bhattacharya, Stuart, and Anandkumar]{li2021fno}
Zongyi Li, Nikola~Borislavov Kovachki, Kamyar Azizzadenesheli, Burigede Liu,
  Kaushik Bhattacharya, Andrew Stuart, and Anima Anandkumar.
\newblock Fourier neural operator for parametric partial differential
  equations.
\newblock In \emph{Proceedings of the 9th International Conference on Learning
  Representations}, 2021{\natexlab{b}}.
\newblock To Appear.

\bibitem[Liu et~al.(2019{\natexlab{a}})Liu, Chen, Schroff, Adam, Hua, Yuille,
  and Fei-Fei]{liu2019autodl}
Chenxi Liu, Liang-Chieh Chen, Florian Schroff, Hartwig Adam, Wei Hua, Alan
  Yuille, and Li~Fei-Fei.
\newblock Auto-{D}eep{L}ab: Hierarchical neural architecture search for
  semantic image segmentation.
\newblock In \emph{Proceedings of the IEEE Conference on Conference on Computer
  Vision and Pattern Recognition}, 2019{\natexlab{a}}.

\bibitem[Liu et~al.(2019{\natexlab{b}})Liu, Simonyan, and Yang]{liu2019darts}
Hanxiao Liu, Karen Simonyan, and Yiming Yang.
\newblock {DARTS}: Differentiable architecture search.
\newblock In \emph{Proceedings of the 7th International Conference on Learning
  Representations}, 2019{\natexlab{b}}.

\bibitem[Mei et~al.(2020)Mei, Li, Lian, Jin, Yang, Yuille, and
  Yang]{mei2020atomnas}
Jieru Mei, Yingwei Li, Xiaochen Lian, Xiaojie Jin, Linjie Yang, Alan Yuille,
  and Jianchao Yang.
\newblock Atom{NAS}: Fine-grained end-to-end neural architecture search.
\newblock In \emph{Proceedings of the 8th International Conference on Learning
  Representations}, 2020.

\bibitem[Moczulski et~al.(2015)Moczulski, Denil, Appleyard, and
  de~Freitas]{moczulski2015acdc}
Marcin Moczulski, Misha Denil, Jeremy Appleyard, and Nando de~Freitas.
\newblock {ACDC}: A structured efficient linear layer.
\newblock arXiv, 2015.

\bibitem[Nekrasov et~al.(2019)Nekrasov, Chen, Shen, and Reid]{nekrasov2019nas}
Vladimir Nekrasov, Hao Chen, Chunhua Shen, and Ian Reid.
\newblock Fast neural architecture search of compact semantic segmentation
  models via auxiliary cells.
\newblock In \emph{Proceedings of the IEEE Conference on Computer Vision and
  Pattern Recognition}, 2019.

\bibitem[Pham et~al.(2018)Pham, Guan, Zoph, Le, and Dean]{pham2018enas}
Hieu Pham, Melody~Y. Guan, Barret Zoph, Quoc~V. Le, and Jeff Dean.
\newblock Efficient neural architecture search via parameter sharing.
\newblock In \emph{Proceedings of the 35th International Conference on Machine
  Learning}, 2018.

\bibitem[Real et~al.(2020)Real, Liang, So, and Le]{real2020automlzero}
Esteban Real, Chen Liang, David~R. So, and Quoc~V. Le.
\newblock Auto{ML}-{Z}ero: Evolving machine learning algorithms from scratch.
\newblock In \emph{Proceedings of the 37th International Conference on Machine
  Learning}, 2020.

\bibitem[Ronneberger et~al.(2015)Ronneberger, Fischer, and
  Brox]{ronneberger2015unet}
Olaf Ronneberger, Philipp Fischer, and Thomas Brox.
\newblock U-{N}et: Convolutional networks for biomedical image segmentation.
\newblock In \emph{Medical Image Computing and Computer-Assisted Intervention},
  2015.

\bibitem[Russakovsky et~al.(2015)Russakovsky, Deng, Su, Krause, Satheesh, Ma,
  Huang, Karpathy, Khosla, Bernstein, Berg, and
  Fei-Fei]{russakovsky2015imagenet}
Olga Russakovsky, Jia Deng, Hao Su, Jonathan Krause, Sanjeev Satheesh, Sean Ma,
  Zhiheng Huang, Andrej Karpathy, Aditya Khosla, Michael Bernstein,
  Alexander~C. Berg, and Li~Fei-Fei.
\newblock {ImageNet} large scale visual recognition challenge.
\newblock \emph{International Journal of Computer Vision}, 115\penalty0
  (3):\penalty0 211--252, 2015.

\bibitem[Senior et~al.(2020)Senior, Evans, Jumper, Kirkpatrick, Sifre, Green,
  Qin, {\v Z}{\'\i}dek, Nelson, Bridgland, Penedones, Petersen, Simonyan,
  Crossan, Kohli, Jones, Silver, Kavukcuoglu, and Hassabis]{alphafold}
Andrew~W. Senior, Richard Evans, John Jumper, James Kirkpatrick, Laurent Sifre,
  Tim Green, Chongli Qin, Augustin {\v Z}{\'\i}dek, Alexander W.~R. Nelson,
  Alex Bridgland, Hugo Penedones, Stig Petersen, Karen Simonyan, Steve Crossan,
  Pushmeet Kohli, David~T. Jones, David Silver, Koray Kavukcuoglu, and Demis
  Hassabis.
\newblock Improved protein structure prediction using potentials from deep
  learning.
\newblock \emph{Nature}, 577\penalty0 (7792):\penalty0 706--710, 2020.
\newblock \doi{10.1038/s41586-019-1923-7}.
\newblock URL \url{https://doi.org/10.1038/s41586-019-1923-7}.

\bibitem[Sirignano and Spiliopoulos(2018)]{sirignano2018dgm}
Justin Sirignano and Konstantinos Spiliopoulos.
\newblock {DGM}: A deep learning algorithm for solving partial differential
  equations.
\newblock \emph{Journal of Computational Physics}, 375:\penalty0 1339--1364,
  2018.

\bibitem[Vaswani et~al.(2017)Vaswani, Shazeer, Parmar, Uszkoreit, Jones, Gomez,
  Kaiser, and Polosukhin]{vaswani2017attention}
Ashish Vaswani, Noam Shazeer, Niki Parmar, Jakob Uszkoreit, Llion Jones,
  Aidan~N. Gomez, {\L}ukasz Kaiser, and Illia Polosukhin.
\newblock Attention is all you need.
\newblock In \emph{Advances in Neural Information Processing Systems}, 2017.

\bibitem[Wang et~al.(2020{\natexlab{a}})Wang, Yang, Chen, Bai, Zhang, Su, Kou,
  Tong, Yang, and Zhou]{wang2020textnas}
Yujing Wang, Yaming Yang, Yiren Chen, Jing Bai, Ce~Zhang, Guinan Su, Xiaoyu
  Kou, Yunhai Tong, Mao Yang, and Lidong Zhou.
\newblock Textnas: A neural architecture search space tailored for text
  representation.
\newblock In \emph{Proceedings of the 34th AAAI Conference on Artificial
  Intelligence}, 2020{\natexlab{a}}.

\bibitem[Wang et~al.(2020{\natexlab{b}})Wang, Ma, Liu, and
  Tang]{wang2020rtransformer}
Zhiwei Wang, Yao Ma, Zitao Liu, and Jiliang Tang.
\newblock {R-T}ransformer: Recurrent neural network enhanced transformer.
\newblock In \emph{Proceedings of the 8th International Conference on Learning
  Representations}, 2020{\natexlab{b}}.

\bibitem[Xu et~al.(2020)Xu, Xie, Zhang, Chen, Qi, Tian, and
  Xiong]{xu2020pcdarts}
Yuhui Xu, Lingxi Xie, Xiaopeng Zhang, Xin Chen, Guo-Jun Qi, Qi~Tian, and
  Hongkai Xiong.
\newblock {PC-DARTS}: Partial channel connections for memory-efficient
  architecture search.
\newblock In \emph{Proceedings of the 8th International Conference on Learning
  Representations}, 2020.

\bibitem[Yang et~al.(2020)Yang, Esperan\c{c}a, and Carlucci]{yang2020nas}
Antoine Yang, Pedro~M. Esperan\c{c}a, and Fabio~M. Carlucci.
\newblock {NAS} evaluation is frustratingly hard.
\newblock In \emph{Proceedings of the 8th International Conference on Learning
  Representations}, 2020.

\bibitem[Ying et~al.(2019)Ying, Klein, Christiansen, Real, Murphy, and
  Hutter]{ying2019nasbench101}
Chris Ying, Aaron Klein, Eric Christiansen, Esteban Real, Kevin Murphy, and
  Frank Hutter.
\newblock {NAS-Bench-101}: Towards reproducible neural architecture search.
\newblock In \emph{Proceedings of the 36th International Conference on Machine
  Learning}, 2019.

\bibitem[Zagoruyko and Komodakis(2016)]{zagoruyko2016wideresnet}
Sergey Zagoruyko and Nikos Komodakis.
\newblock Wide residual networks.
\newblock In \emph{Proceedings of the British Machine Vision Conference}, 2016.

\bibitem[Zela et~al.(2020)Zela, Siems, and Hutter]{zela2020nasbench1shot1}
Arber Zela, Julien Siems, and Frank Hutter.
\newblock {NAS-Bench-1Shot1}: Benchmarking and dissecting one-shot neural
  architecture search.
\newblock In \emph{Proceedings of the 8th International Conference on Learning
  Representations}, 2020.

\bibitem[Zoph et~al.(2018)Zoph, Vasudevan, Shlens, and Le]{zoph2018nas}
Barret Zoph, Vijay Vasudevan, Jonathon Shlens, and Quoc~V. Le.
\newblock Learning transferable architectures for scalable image recognition.
\newblock In \emph{Proceedings of the IEEE Conference on Computer Vision and
  Pattern Recognition}, 2018.

\end{thebibliography}
\bibliographystyle{plain}

\appendix

\newpage
\section{Expressivity Results}\label{app:expressivity}

Here we collect results on the expressivity of the set of $\XD$-operations.
For simplicity, our results will be in the following single-dimensional ($N=1$) setting:
\begin{Set}\label{app:set:single}
	We consider input spaces of form $\X=\R^{c\times m}$ for input size $m\in\N$ and channel count $c\in\N$ and parameter spaces $\W=\R^{c\times c\times k}$ for filter size $k\in[n]$, where output size $n\ge m$ is a power of 2.
\end{Set}
It is straightforward to extend the results to multiple dimensions using Kronecker products and to input sizes other than powers of two using padding.
Note that all of our results will also assume a circular padded domain.

\subsection{Convolutions}
\begin{Def}\label{app:def:conv}
	A {\bf convolution} in Setting~\ref{app:set:single} with filter size $k$, dilation $d\in[\lfloor\frac{n-1}{k-1}\rfloor]$, stride $s\in[n-1]$, and channel groups described by a matrix $\*B\in\{0,1\}^{n\times n}$ s.t. $\*B_{[i,j]}=1$ if channels $i$ and $j$ are in the same group and 0 otherwise is a parameterizable operation that for any weight $\*w\in\W$ outputs a function mapping every $\*x\in\X$ to
	\begin{equation}
	\frac1n
	\begin{pmatrix}
	\diag(\atr_s(\underline{\1_{\lceil\frac ns\rceil}}))\sum\limits_{j=1}^c\*B_{[1,j]}\*F_n^{-1}\diag(\*F_n\atr_d(\underline{\*w_{[1,j]}}))\*F_n\*x_{[j]}\\
	\vdots\\
	\diag(\atr_s(\underline{\1_{\lceil\frac ns\rceil}}))\sum\limits_{j=1}^c\*B_{[c,j]}\*F_n^{-1}\diag(\*F_n\atr_d(\underline{\*w_{[c,j]}}))\*F_n\*x_{[j]}
	\end{pmatrix}
	\end{equation}
	where $\*F_n\in\C^{n\times n}$ is the $n\times n$ DFT and $\atr_d:\R^n\mapsto\R^n$ is an atrous permutation of a vector that is equivalent to multiplication by some permutation matrix $\*P_d\in\{0,1\}^{n\times n}$.
	We will use $\Conv_k$ to denote the case of $d=1$, $s=1$, and $\*B=\1_{c\times c}$.
\end{Def}
\begin{Clm}\label{app:clm:conv}
	All multi-channel convolutions of the form given in Definition~\ref{app:def:conv} are contained in the search space of XD-operations of depth $(1,3,1)$.
\end{Clm}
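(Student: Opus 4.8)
The plan is to exhibit explicit architecture parameters and match the template of Definition~\ref{def:xd} against the convolution formula of Definition~\ref{app:def:conv} term-by-term. First I would handle the channel structure: since each entry $\*B_{[i,j]}\in\{0,1\}$ is a scalar it commutes with every linear map, so setting the channel gates to $\*C=\*B$ turns the $i$-th output block $\sum_{j=1}^c\*C_{[i,j]}\XD_{\alpha,\*b}^\1(\*w_{[i,j]})(\*x_{[j]})$ of the multi-channel XD-operation into exactly the $i$-th row of the convolution, provided the single-channel factor $\XD_{\alpha,\*b}^\1(\*w_{[i,j]})(\*x_{[j]})$ equals $\frac1n\diag(\atr_s(\underline{\1_{\lceil\frac{n}{s}\rceil}}))\*F_n^{-1}\diag(\*F_n\atr_d(\underline{\*w_{[i,j]}}))\*F_n\*x_{[j]}$. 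This reduces the claim to a single identity for one shared triple $\alpha=(\*K,\*L,\*M)$ and bias $\*b$, used across all channels just as in a convolutional layer.

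For this single-channel identity I would take $\*b=\0$ and set $\*M=\*F_n$, $\*L=\*F_n\*P_d$ with $\*P_d$ the permutation realizing the atrous map $\atr_d$, and $\*K=\frac1n\diag(\atr_s(\underline{\1_{\lceil\frac{n}{s}\rceil}}))\*F_n^{-1}$. Substituting into Equation~\ref{eq:biased} yields $\Real(\*K\diag(\*L\underline{\*w})\*M\underline{\*x})$; using $\*L\underline{\*w}=\*F_n\atr_d(\underline{\*w})$ and $\*M\underline{\*x}=\*F_n\*x$ reproduces the bracketed convolution expression inside the $\Real$. The $\Real$ operator is then a no-op exactly as in the diagonalization of Equation~\ref{eq:fourier}, because a real filter convolved with a real input---followed by the real $0/1$ stride mask and the $\frac1n$ scaling---returns a real vector. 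This establishes the functional identity and leaves only the depth count.

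The depth bookkeeping is the crux of the argument. Using the convention already in force in Equations~\ref{eq:fourier} and~\ref{eq:dilated} that $\*F_n$ and $\*F_n^{-1}$ are depth-$1$ K-matrices, $\*M=\*F_n$ is immediately depth $1$, while $\*L=\*F_n\*P_d$ has depth $1+2=3$ by combining the DFT with the fact that permutations are depth-$2$ K-matrices \citep{dao2020kaleidoscope} and that depth is additive under composition (precisely as for $\DilC$ in Equation~\ref{eq:dilated}). The step I expect to require the most care is showing that $\*K=\frac1n\diag(\atr_s(\underline{\1_{\lceil\frac{n}{s}\rceil}}))\*F_n^{-1}$ is still depth $1$ despite the scalar and $0/1$ diagonal that encode the stride. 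The plan is to absorb both into the outermost butterfly factor of $\*F_n^{-1}$: that factor carries two free nonzero entries per row, so left-multiplication by any diagonal $\diag(\*e)$ merely rescales those per-row entries and again yields a valid butterfly factor, leaving the whole product a single butterfly matrix. Hence $\*K$ lies in the same depth-$1$ family as $\*F_n^{-1}$ itself, completing the depth $(1,3,1)$ count and the proof.
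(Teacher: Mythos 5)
Your proposal is correct and takes essentially the same route as the paper's proof, which sets exactly the same architecture parameters ($\*K=\diag(\atr_s(\underline{\1_{\lceil n/s\rceil}}))\*F_n^{-1}$, $\*L=\*F_n\*P_d$, $\*M=\*F_n$, $\*b=\0_n$, $\*C=\*B$) and does the same depth bookkeeping. The only difference is that where you verify the diagonal-absorption step for $\*K$ by hand (via the two-nonzeros-per-row structure of the outermost butterfly factor), the paper simply cites this closure property of K-matrices, along with the depth-1 DFT and depth-2 permutation facts, from \citep{dao2020kaleidoscope}.
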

\begin{proof}
	Setting the architecture parameters to be $\*K=\diag(\atr_s(\underline{\1_{\lceil\frac ns\rceil}}))\*F_n^{-1}$, $\*L=\*F_n\*P_d$, $\*M=\*F_n$, $\*b=\0_n$, and $\*C=\*B$, and noting that (a) the DFT and its inverse are both depth 1 K-matrices, (b) multiplying a K-matrix by a diagonal matrix is another K-matrix of the same depth, and (c) permutation matrices are K-matrices of depth 2 yields the result.
	These three facts can be found in the original paper~\citep{dao2020kaleidoscope}.
\end{proof}
\begin{Rem}
	Note that for the case of dilation $d=1$ the result in Claim~\ref{app:clm:conv} holds with depth $\1_3$.
\end{Rem}

\subsection{Parameter-Free Operations}

\begin{Def}\label{app:def:skip}
	The {\bf skip-connection} in Setting~\ref{app:set:single} is parameterizable operation that outputs a function mapping every $\*x\in\X$ to itself.
	The {\bf zero-operation} in Setting~\ref{app:set:single} is parameterizable operation that outputs a function mapping every $\*x\in\X$ to $\0_{c\times n}$.
\end{Def}
\begin{Clm}\label{app:clm:skip}
	The skip-connection and zero-operation are both contained in the search space of XD-operations of depth $\1_3$.
\end{Clm}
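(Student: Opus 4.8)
The plan is to prove both statements by direct construction: I would exhibit explicit architecture parameters $a=(\alpha,\*b,\*C)$ realizing each operation, verify by substitution into Definition~\ref{def:xd} that the resulting map agrees with Definition~\ref{app:def:skip}, and then confirm that all three K-matrices have depth $1$. Since both target operations are parameter-free, the guiding principle is that the output must not depend on the weight $\*w$; within the biased single-channel form $\XD_{\alpha,\*b}^\1(\*w)(\*x)=\Real(\*K\diag(\*L\underline{\*w}+\*b)\*M\underline{\*x})$ the only mechanism for this is to annihilate the $\*w$-dependent term by setting $\*L=\0_{n\times n}$, so that $\*L\underline{\*w}=\0_n$ regardless of $\*w$ and the diagonal is governed entirely by the bias $\*b$.

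For the skip-connection I would set $\*K=\*M=\*I_n$, $\*L=\0_{n\times n}$, $\*b=\1_n$, and channel gates $\*C=\*I_c$. With these choices the single-channel map collapses to $\Real(\diag(\1_n)\underline{\*x})=\underline{\*x}$, which is exactly $\*x$ once we take $n=m$ (as is forced for a size-preserving skip-connection), and the identity channel gate $\*C=\*I_c$ makes the $i$th output channel equal to the $i$th input channel, so that $\XD_a(\*w)(\*x)=\bigoplus_{i=1}^c\*x_{[i]}=\*x$ for every $\*w$. For the zero-operation there are two equally short routes: either retain $\*K=\*M=\*I_n$ and $\*L=\0_{n\times n}$ but set $\*b=\0_n$, so that $\diag(\0_n)=\0_{n\times n}$ annihilates every single-channel map, or simply take $\*C=\0_{c\times c}$ so that every summand is scaled by zero; either way $\XD_a(\*w)(\*x)=\0_{c\times n}$ independently of $\*w$.

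The only nonroutine step is verifying the depth constraint, i.e. that the matrices appearing above---$\*I_n$ and $\0_{n\times n}$---are depth-$1$ K-matrices, which is what pins the claimed depth at $\1_3$. For this I would appeal to the butterfly factorization underlying the K-matrix construction of \citep{dao2020kaleidoscope}: a depth-$1$ K-matrix is a single product of butterfly factors whose constituent $2\times2$ blocks may take arbitrary values, so choosing all blocks to be identity blocks realizes $\*I_n$ while choosing all blocks to be zero realizes $\0_{n\times n}$, both in depth $1$. I expect this to be the main (and essentially the only) obstacle, since the algebraic verification in the preceding paragraph is immediate once the constructions are written down; everything else is a matter of reading off the output of Definition~\ref{def:xd}.
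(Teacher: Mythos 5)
Your proposal is correct and takes essentially the same route as the paper's proof: annihilate the weight dependence with $\*L=\0_{n\times n}$, take $\*C=\*I_c$, and encode the operation entirely in the bias ($\*b=\1_n$ for the skip-connection, $\*b=\0_n$ for the zero-operation). The only, immaterial, difference is that you set $\*K=\*M=\*I_n$ (as in the paper's Section~3 sketch) while the appendix proof uses $\*K=\*F_n^{-1}$, $\*M=\*F_n$, which conjugate the diagonal to the same effect; your check that $\*I_n$ and $\0_{n\times n}$ are depth-1 K-matrices is sound.
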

\begin{proof}
	For both set the architecture parameters to be $\*K=\*F_n^{-1}$, $\*L=\0_{n\times n}$, $\*M=\*F_n$, and $\*C=\*I_c$.
	To obtain the skip-connection set $\*b=\1_n$;
	to obtain the zero-operation set $\*b=\0_n$.
\end{proof}

\begin{Def}\label{app:def:avgp}
	An {\bf average pooling} operation in Setting~\ref{app:set:single} with filter size $k$, dilation $d\in[\lfloor\frac{n-1}{k-1}\rfloor]$, and stride $s\in[n-1]$ is parameterizable operation outputs a function mapping every $\*x\in\X$ to the output of a convolution (as in Definition~\ref{app:def:conv}) with the same filter size, dilation, and stride, channel groups described by $\*B=\*I_c$, and filters $\*w_{[j,j]}=\1_k/k~\forall~j\in[c]$.
\end{Def}
\newpage
\begin{Clm}\label{app:clm:avgp}
	All average pooling operations are contained in the search space of XD-operations of depth $\1_3$.
\end{Clm}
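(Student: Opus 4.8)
The plan is to exploit the fact that average pooling is \emph{parameter-free}: by Definition~\ref{app:def:avgp} it outputs the same convolution no matter what the model weights $\*w$ are, since its filter is hard-wired to $\1_k/k$. The central trick is therefore to set the architecture parameter $\*L=\0_{n\times n}$ in the biased single-channel XD-operation of Equation~\ref{eq:biased}, so that $\*L\underline{\*w}+\*b=\*b$ for every $\*w$ and the operation stops depending on the weights. The fixed pooling filter, together with its dilation, is then folded entirely into the bias $\*b$, and I would match the remaining scalars (the stride-sampling and the $1/n$ normalization) through $\*K$.

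Concretely, I would mirror the construction in the proof of Claim~\ref{app:clm:conv}, but push the filter and the atrous permutation into $\*b$ rather than into $\*L$. That is, set
\begin{equation}
\*K=\frac1n\diag(\atr_s(\underline{\1_{\lceil n/s\rceil}}))\*F_n^{-1},\quad \*L=\0_{n\times n},\quad \*M=\*F_n,\quad \*b=\*F_n\atr_d(\underline{\1_k/k}),\quad \*C=\*I_c.
\end{equation}
With these choices the biased single-channel operation collapses to $\Real(\*K\diag(\*b)\*M\underline{\*x})=\frac1n\diag(\atr_s(\underline{\1_{\lceil n/s\rceil}}))\*F_n^{-1}\diag(\*F_n\atr_d(\underline{\1_k/k}))\*F_n\underline{\*x}$, which is exactly the per-channel convolution with filter $\1_k/k$, dilation $d$, and stride $s$ from Definition~\ref{app:def:conv} (the $\Real$ is harmless, as the inverse DFT of a real convolution is real). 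Taking $\*C=\*I_c$ in the multi-channel Definition~\ref{def:xd} restricts each output channel to act only on the matching input channel, reproducing the channel-group matrix $\*B=\*I_c$ of Definition~\ref{app:def:avgp}. I would then check that the resulting multi-channel map equals average pooling for \emph{every} weight $\*w$, which is precisely what Definition~\ref{def:po} demands for genuine expressivity rather than mere parameterizability.

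Finally I would verify the depth. As in Claim~\ref{app:clm:conv}, $\*K$ is depth $1$ since $\*F_n^{-1}$ is a depth-$1$ K-matrix and multiplying a K-matrix by a diagonal matrix preserves its depth; $\*M=\*F_n$ is depth $1$; and $\*L=\0_{n\times n}$ is a depth-$1$ K-matrix, exactly as used for the parameter-free operations in Claim~\ref{app:clm:skip}. Hence the construction lies in the search space of XD-operations of depth $\1_3$.

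The point I expect to require the most care—and the reason the depth improves from the $(1,3,1)$ of the general dilated convolution in Claim~\ref{app:clm:conv} down to $\1_3$—is that here the atrous permutation $\atr_d$ acts on the \emph{constant} filter $\1_k/k$, so it can be precomputed and absorbed into $\*b$, instead of being realized dynamically inside $\*L$ as the depth-$2$ permutation $\*P_d$. The main obstacle is thus conceptual rather than computational: confirming that pushing both the pooling filter and the dilation into the bias leaves the output genuinely independent of $\*w$. Once $\*L=\0$ this independence is immediate, but it is the key step that distinguishes \emph{expressing} a parameter-free operation from merely parameterizing an XD-operation to compute a single pooling output.
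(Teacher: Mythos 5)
Your proof is correct and takes essentially the same approach as the paper: set $\*L=\0_{n\times n}$ so the output is weight-independent, fold the dilated pooling filter into the bias, realize the stride through the diagonal mask $\diag(\atr_s(\underline{\1_{\lceil n/s\rceil}}))$ multiplying $\*F_n^{-1}$ in $\*K$, and take $\*C=\*I_c$, with the same depth accounting via the facts that DFTs are depth-1 K-matrices and diagonal multiplication preserves depth. In fact your bias $\*b=\*F_n\atr_d(\underline{\1_k/k})$ is the more careful choice: the paper's appendix proof writes $\*b=\atr_d(\underline{\1_k/k})$, omitting the DFT factor that is needed to match $\diag(\*F_n\atr_d(\underline{\*w}))$ in the convolution's diagonalization and that the paper's own main text (``$\*b$ to be $\*F$ multiplied by a pooling filter'') prescribes.
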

\begin{proof}
	Setting the architecture parameters to be $\*K=\diag(\atr_s(\underline{\1_{\lceil\frac ns\rceil}}))\*F_n^{-1}$, $\*L=\0_{n\times n}$, $\*M=\*F_n$, $\*b=\atr_d(\underline{\1_k/k})$, and $\*C=\*I_c$ and noting that (a) the DFT and its inverse are both depth 1 K-matrices and (b) multiplying a K-matrix by a diagonal matrix of the same depth is another K-matrix of the same depth yields the result.
\end{proof}

\subsection{Compositions with Multiplication by a Fixed K-Matrix}

\begin{Def}\label{app:def:lin}
	A {\bf fixed linear operation} $\Lin_{\*A}$ in Setting~\ref{app:set:single} with fixed matrix $\*A\in\R^{n\times n}$ is a parameterizable operation that outputs a function mapping every $\*x\in\X$ to $\Lin_{\*A}(\*w)(\*x)=\begin{pmatrix}\*A\*x_{[1]}&\cdots&\*A\*x_{[c]}\end{pmatrix}^T$.
	For example, $\Lin_{\*I_c}=\Id$.
\end{Def}

\begin{Def}\label{app:def:composition}
	Let $\Op_1$ and $\Op_2$ be two parameterizable operations in Setting~\ref{app:set:single} with $\X$.
	Then for any weight $\*w\in\W$ their {\bf composition} $\Op_1\circ\Op_2$ outputs the parameterized function $\Op_1(\*w)\circ\Op_2(\*w)$.
\end{Def}

\begin{Clm}\label{app:clm:composition}
	Let $\Op$ be a parameterizable operation in Setting~\ref{app:set:single} that is contained in the set of XD-operations of some depth $\*d\in\N^3$ and let $\*A$ be a K-matrix of depth $d'$.
	Then $\Op\circ\Lin_{\*A}$ is contained in the set of XD-operations of depth $(\*d_{[1]},\*d_{[2]},\*d_{[3]}+d')$ and $\Lin_{\*A}\circ\Op$ is contained in the set of XD-operations of depth $(\*d_{[1]}+d',\*d_{[2]},\*d_{[3]})$.
\end{Clm}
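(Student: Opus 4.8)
The plan is to write $\Op$ explicitly as an XD-operation and then fold the fixed K-matrix $\*A$ into whichever of the three K-matrices sits adjacent to it in the composition, exploiting the stated property that a product of a depth-$d_1$ and a depth-$d_2$ K-matrix is a K-matrix of depth $d_1+d_2$. Since $\Op$ lies in the depth-$\*d$ XD search space, I would fix architecture parameters $a=(\alpha,\*b,\*C)$ with $\alpha=(\*K,\*L,\*M)$, where $\*K$, $\*L$, $\*M$ are K-matrices of depths $\*d_{[1]}$, $\*d_{[2]}$, $\*d_{[3]}$ and $\Op=\XD_a$. Unrolling Definition~\ref{def:xd} together with \eqref{eq:biased}, channel $i$ of $\Op(\*w)(\*x)$ equals $\sum_{j=1}^c\*C_{[i,j]}\Real(\*K\diag(\*L\underline{\*w_{[i,j]}}+\*b)\*M\underline{\*x_{[j]}})$.

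For $\Op\circ\Lin_{\*A}$, I would first evaluate the inner operation: channel $j$ of $\Lin_{\*A}(\*w)(\*x)$ is $\*A\underline{\*x_{[j]}}\in\R^n$ (padding $\*x_{[j]}$ to length $n$ as in \eqref{eq:biased}). Feeding this into $\Op$, the outer padding is vacuous because the vector already has length $n$, so $\*M$ acts directly as $\*M\*A\underline{\*x_{[j]}}$. Defining $\*M'=\*M\*A$, a K-matrix of depth $\*d_{[3]}+d'$, the whole composition equals $\XD_{a'}$ with $a'=((\*K,\*L,\*M'),\*b,\*C)$, which has the claimed depth $(\*d_{[1]},\*d_{[2]},\*d_{[3]}+d')$.

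For $\Lin_{\*A}\circ\Op$, I would instead apply $\*A$ to each output channel of $\Op(\*w)(\*x)$. Here two observations do the work: $\*A$ is real, so it commutes with $\Real$, i.e. $\*A\,\Real(\*K\cdots)=\Real(\*A\*K\cdots)$; and $\Lin_{\*A}$ acts on each channel separately, so $\*A$ passes through both the channel sum over $j$ and the scalar gates $\*C_{[i,j]}$ unchanged. Setting $\*K'=\*A\*K$, a K-matrix of depth $d'+\*d_{[1]}$, yields $\XD_{a''}$ with $a''=((\*K',\*L,\*M),\*b,\*C)$ and depth $(\*d_{[1]}+d',\*d_{[2]},\*d_{[3]})$.

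The routine algebra is immediate; the points requiring care—and the only real obstacles—are the padding and the interaction with $\Real$. For the right composition I must confirm that the intermediate output of $\Lin_{\*A}$ is already full length $n$, so that no extra zero-padding is inserted between $\*A$ and $\*M$ (otherwise $\*M\*A$ would not be the matrix actually applied). For the left composition I rely on $\*A\in\R^{n\times n}$ being real to move it inside $\Real$, and on its channel-wise, weight-independent action to pull it through the sum and gates without altering $\*b$ or $\*C$. In both cases the new architecture parameters depend only on $\*A$ and $a$, never on $\*w$, so each fold preserves the remaining two K-matrices, adds $d'$ to exactly one depth coordinate, and establishes containment at the level of operations, giving the two stated results.
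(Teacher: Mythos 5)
Your proof is correct and takes essentially the same route as the paper's: fold $\*A$ into the adjacent K-matrix, replacing $\*M$ by $\*M\*A$ for $\Op\circ\Lin_{\*A}$ and $\*K$ by $\*A\*K$ for $\Lin_{\*A}\circ\Op$, with depths adding under K-matrix products. Your additional care about the vacuous padding and about $\*A$ being real so it commutes with $\Real$ only makes explicit details the paper's terser proof leaves implicit.
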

\begin{proof}
	Let $\*K$ and $\*M$ be the first and last K-matrices of the representation of $\Op$ as an XD-operation, which thus have depth at most $\*d_{[1]}$ and $\*d_{[3]}$, respectively.
	Then the representation of $\Op\circ\Lin_{\*A}$ as an XD-operation is the same except with depth $\*d_{[3]}+d'$ K-matrix $\*M\*A$ as the last K-matrix, and similarly the representation of $\Lin_{\*A}\circ\Op$ as an XD-operation is the same except with depth $\*d_{[1]}+d'$ K-matrix $\*A\*K$ as the first K-matrix.
\end{proof}

\subsection{Other Named Operations}

\begin{Def}
	Suppose we have a fixed $n$-node graph with adjacency matrix $\*A$ and degree matrix $\*D$, and let $\hat{\*A}$ and $\hat{\*D}$ be the adjacency and degree matrices, respectively, of the same graph but with added self-loops.
	Then regular {\bf graph convolution} \citep{kipf2017gcn} in Setting~\ref{app:set:single} with $k=1$ is a parameterizable operation that for any weight $\*W\in\W$ outputs a function mapping every $\*x\in\X$ to $\hat{\*D}^{-\frac12}\hat{\*A}\hat{\*D}^{-\frac12}\*x^T\*w$ and the {\bf diffusion graph convolution} \citep{li2018dcrnn} in Setting~\ref{app:set:single} with $k=1$ is a parameterizable operation that for any weight $\*W\in\W$ outputs a function mapping every $\*x\in\X$ to $\*D^{-1}\*A\*x^T\*w$.
\end{Def}
\begin{Clm}
	Suppose $\*A$ and $\hat{\*A}$ can be represented by K-matrices of depth $d$ and $\hat d$, respectively.
	Then the corresponding graph convolution is contained in the search space of XD-operations of depth $(1,1,\hat d+1)$ and the corresponding diffusion graph convolution in that of depth $(1,1,d+1)$.
\end{Clm}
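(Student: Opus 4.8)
The plan is to express each graph convolution as a composition of a $1\times 1$ convolution with multiplication by a (modified) adjacency matrix, and then invoke the composition result of Claim~\ref{app:clm:composition} together with the convolution result of Claim~\ref{app:clm:conv}.

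First I would observe that setting $k=1$ collapses the filter-dependent part of an XD-operation to a scalar: for a length-one filter $\*w_{[i,j]}$ the zero-padding $\underline{\*w_{[i,j]}}$ is a scaled delta, so $\*F_n\underline{\*w_{[i,j]}}=w_{[i,j]}\1_n$ and hence $\*F_n^{-1}\diag(\*F_n\underline{\*w_{[i,j]}})\*F_n=w_{[i,j]}\*I_n$. Thus $\Conv_1$ acts purely as channel mixing $\*x\mapsto\big(\sum_j w_{[i,j]}\*x_{[j]}\big)_{i}$, matching the role of the weight matrix in the graph-convolution definition (up to the transpose between the $c\times n$ XD output format and the $n\times c$ format $\*x^T\*w$, which I would note amounts to a relabeling of the weight matrix and a reshaping of the output). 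By the Remark following Claim~\ref{app:clm:conv}, $\Conv_1$ is then an XD-operation of depth $\1_3=(1,1,1)$.

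Next I would identify the spatial (node) part. For the regular graph convolution this is multiplication by $\tilde{\*A}=\hat{\*D}^{-\frac12}\hat{\*A}\hat{\*D}^{-\frac12}$, and for the diffusion variant it is $\*D^{-1}\*A$. Using the assumption that $\hat{\*A}$ and $\*A$ are K-matrices of depth $\hat d$ and $d$, together with the fact (from the Kaleidoscope paper, also used in the proofs of Claims~\ref{app:clm:conv} and~\ref{app:clm:avgp}) that left- or right-multiplying a K-matrix by a diagonal matrix yields a K-matrix of the same depth, I would conclude that $\tilde{\*A}$ is a K-matrix of depth $\hat d$ and $\*D^{-1}\*A$ is a K-matrix of depth $d$. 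Since the node-aggregation and the channel-mixing act on independent axes, each graph convolution equals $\Conv_1\circ\Lin_{\tilde{\*A}}$ (resp. $\Conv_1\circ\Lin_{\*D^{-1}\*A}$), with the fixed-matrix multiplication applied to the input.

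Finally I would apply Claim~\ref{app:clm:composition}: composing the depth-$(1,1,1)$ operation $\Conv_1$ with $\Lin$ of a depth-$\hat d$ (resp. depth-$d$) K-matrix on the input side adds that depth to the third coordinate, giving depth $(1,1,\hat d+1)$ for the graph convolution and $(1,1,d+1)$ for the diffusion graph convolution. The main obstacle I expect is bookkeeping rather than anything deep: carefully reconciling the $n\times c$ versus $c\times n$ output conventions and the transposed weight indexing so that the XD channel-gate and weight structure reproduces exactly the map $\*x^T\*w$, and verifying that the two-sided diagonal conjugation $\hat{\*D}^{-\frac12}(\cdot)\hat{\*D}^{-\frac12}$ preserves K-matrix depth when applied on both sides.
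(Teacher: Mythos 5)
Your proposal is correct and follows essentially the same route as the paper: both decompose the graph convolution as $\Conv_1\circ\Lin_{\*G}$ with $\*G=\hat{\*D}^{-\frac12}\hat{\*A}\hat{\*D}^{-\frac12}$ (resp. $\*G=\*D^{-1}\*A$), note that diagonal multiplication preserves K-matrix depth so $\*G$ has depth $\hat d$ (resp. $d$), and then apply Claims~\ref{app:clm:conv} and~\ref{app:clm:composition} to obtain depth $(1,1,\hat d+1)$ (resp. $(1,1,d+1)$). The extra details you supply (the computation showing $\Conv_1$ is pure channel mixing, and the output-format bookkeeping) are fine but not beyond what the paper's one-line argument already encapsulates.
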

\begin{proof}
	For any $\*G\in\R^{n\times n}$ we have  $\*G\*x^T\*w=\Lin_{\*G}(\*w)(\*x)\*w=\Conv_1(\*w)(\Lin_{\*G}(\*w)(\*x))=(\Conv_1\circ\Lin_{\*G})(\*w)(\*x)$.
	The result follows by Claims~\ref{app:clm:conv} and~\ref{app:clm:composition}, the fact that a K-matrix multiplied by a diagonal matrix is another K-matrix of the same depth, and by substituting $\*G=\hat{\*D}^{-\frac12}\hat{\*A}\hat{\*D}^{-\frac12}$ (for graph convolution) or $\*G=\*D^{-1}\*A$ (for diffusion graph convolution).
\end{proof}
\begin{Rem}
	Note that the above claim is meaningful because adjacency matrices of realistic graphs are usually sparse and sparse matrices can be efficiently represented as K-matrices \citep{dao2020kaleidoscope}.
\end{Rem}

\begin{Def}
	A {\bf Fourier neural operator} (FNO) \citep{li2021fno} in Setting~\ref{app:set:single} with even $k$ and thus $k/2$ modes is a parameterizable operation that for any weight $\*w\in\W$ outputs a function mapping every $\*x\in\X$ to 
	\begin{equation}
		\begin{pmatrix}
		\Real\left(\sum_{j=1}^c\*F_n^{-1}\diag(\begin{pmatrix}\*w_{[1,j,1:k/2]}+i\*w_{[1,j,k/2+1:k]}&\0_{n-k/2}\end{pmatrix}^T)\*F_n\*x_{[j]}\right)\\
		\vdots\\
		\Real\left(\sum_{j=1}^c\*F_n^{-1}\diag(\begin{pmatrix}\*w_{[c,j,1:k/2]}+i\*w_{[c,j,k/2+1:k]}&\0_{n-k/2}\end{pmatrix}^T)\*F_n\*x_{[j]}\right)
		\end{pmatrix}
	\end{equation}
\end{Def}
\begin{Clm}\label{app:clm:fno}
	The FNO with $k/2$ modes is contained in the search space of XD-operations of depth $(1,4,1)$.
\end{Clm}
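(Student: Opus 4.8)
The plan is to exhibit a single XD-operation that reproduces the FNO by direct substitution into Definition~\ref{def:xd}, so that the only real work is bounding the depth of the middle matrix $\*L$. First I would set $\*K=\*F_n^{-1}$ and $\*M=\*F_n$ (each a depth-$1$ K-matrix, exactly as in Claim~\ref{app:clm:conv}), take channel gates $\*C=\1_{c\times c}$ so that the multi-channel sum over $j$ in Definition~\ref{def:xd} matches the FNO's sum over $j$, and set the bias $\*b=\0_n$. Since $\*C$ is real and $\Real$ is $\R$-linear, $\sum_j\Real(\cdots)=\Real(\sum_j\cdots)$, so with these choices $\XD_a(\*w)(\*x)$ collapses term-by-term to $\Real(\*F_n^{-1}\diag(\*L\underline{\*w_{[i,j]}})\*F_n\*x_{[j]})$. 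It then remains only to choose $\*L$ so that $\*L\underline{\*w}$ equals the complex mode vector $(\*w_{[1:k/2]}+i\*w_{[k/2+1:k]},\,\0_{n-k/2})^T$, and to show this $\*L$ is a K-matrix of depth at most $4$.

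The matrix $\*L$ is the fixed linear map that multiplies the second block of $k/2$ weights by $i$, adds them entrywise onto the first block, and discards everything else; I would build it as a product $\*L=\*B\,\*P\,\diag(\*z)$. Here $\diag(\*z)$ scales positions $k/2+1,\dots,k$ by $i$ and fixes the rest, $\*P$ is the permutation relocating the block of positions $k/2+1,\dots,k$ to positions $M+1,\dots,M+k/2$ where $M$ is the smallest power of two with $M\ge k/2$ (so $2M\le n$, since $n$ is a power of two and $k\le n$), and $\*B$ is the single butterfly factor at scale $2M$ whose $2\times2$ blocks combine position $\ell$ with position $\ell+M$ via $\left(\begin{smallmatrix}1&1\\0&0\end{smallmatrix}\right)$. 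Tracing a weight vector through these maps: the scaling produces $iw_{\ell+k/2}$ in the second block, the permutation moves it to position $M+\ell$ alongside the untouched $w_\ell$ at position $\ell$, and the butterfly factor adds the pair into position $\ell$ while zeroing position $M+\ell$, yielding exactly the mode vector with zeros beyond position $k/2$.

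For the depth count I would invoke three facts from \citep{dao2020kaleidoscope} already used above: a single butterfly factor is a special butterfly matrix and hence a depth-$1$ K-matrix; any permutation is a depth-$2$ K-matrix; and multiplying a K-matrix by a diagonal preserves depth, while composition adds depths. Hence $\*L$ has depth at most $1+2=3\le 4$, and since padding by an identity K-matrix embeds the depth-$3$ set into the depth-$4$ set, $\*L$ qualifies as a depth-$4$ K-matrix, placing the FNO in $\Search_\XD$ at depth $(1,4,1)$.

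The main obstacle is the construction and depth analysis of $\*L$: the ``fold'' that interleaves real and imaginary weights and truncates to the low $k/2$ modes is not itself a DFT, a permutation, or a diagonal, so the key idea is to realize it as one coarse-scale butterfly factor after a single alignment permutation. The delicate point is arranging the alignment so that the butterfly's block structure deposits the combined entries directly in positions $1,\dots,k/2$, thereby avoiding a second gathering permutation that would push the depth above budget; checking that the power-of-two block of size $2M$ fits inside $[n]$ for every even $k\le n$ is what makes this construction go through.
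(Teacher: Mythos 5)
Your proposal is correct, and its overall scaffolding coincides with the paper's proof: the same choices $\*K=\*F_n^{-1}$, $\*M=\*F_n$, $\*b=\0_n$, $\*C=\1_{c\times c}$, and the same target matrix $\*L$ sending $\underline{\*w}$ to $\begin{pmatrix}\*w_{[1:k/2]}+i\*w_{[k/2+1:k]}&\0_{n-k/2}\end{pmatrix}^T$. Where you genuinely diverge is the depth bound for $\*L$. The paper disposes of this in one line by observing that $\*L$ has only $k\le n$ nonzero entries and citing the result of \citep{dao2020kaleidoscope} that any $n$-sparse matrix is a depth-4 K-matrix. You instead construct $\*L=\*B\,\*P\,\diag(\*z)$ explicitly---a diagonal scaling by $i$ on positions $k/2+1,\dots,k$, an alignment permutation into positions $M+1,\dots,M+k/2$ with $M$ the least power of two at least $k/2$, and one coarse-scale butterfly factor with blocks $\left(\begin{smallmatrix}1&1\\0&0\end{smallmatrix}\right)$ that folds the two blocks together and zeroes the remainder. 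Your arithmetic checks out: $M<k\le n$ with both $M$ and $n$ powers of two gives $2M\le n$, the destination block $[M+1,M+k/2]$ lies inside the first butterfly block of size $2M$ and is disjoint from $[1,k/2]$, and the depth accounting (butterfly factor depth 1, permutation depth 2, diagonal absorbed, identity-padding from depth 3 to depth 4) uses only facts the paper itself invokes elsewhere (Claims~\ref{app:clm:conv} and~\ref{app:clm:composition}). What each approach buys: the paper's citation is shorter and needs no case analysis, while your construction is self-contained, exposes exactly which butterfly structure realizes the mode-truncation map, and yields the slightly stronger conclusion that depth $(1,3,1)$ already suffices, so the claimed $(1,4,1)$ has slack.
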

\begin{proof}
	Setting the architecture parameters to be $\*K=\*F_n^{-1}$, $\*L\in\C^{n\times n}$ the $n$-sparse matrix mapping $\underline{\*w}$ to $\begin{pmatrix}\*w_{[1,j,1:k/2]}+i\*w_{[1,j,k/2+1:k]}&\0_{n-k/2}\end{pmatrix}^T$, $\*M=\*F_n$, $\*b=\0_n$, and $\*C=\1_{c\times c}$, and noting that an $n$-sparse matrix is a depth-4 K-matrix \citep{dao2020kaleidoscope} yields the result.
\end{proof}
\begin{Rem}
	If we allow the parameter space in Setting~\ref{app:set:single} to be complex then the FNO with all $k$ modes will be contained in the search space of XD-operations of depth $\1_3$.
\end{Rem}

\begin{Def}
	Each channel of {\bf transposed convolution} with stride $d(k-1)+1$, where $k$ is the kernel size and $d$ is the dilation rate, computes a feature map in which each input element is replaced by that element multiplied by the dilated filter of size $d(k-1)+1$.
	The multi-channel extension of this over parameter space $\W=\R^{c\times c\times k}$ is similar to that for standard convolutions.
\end{Def}
\begin{Clm}
	All transposed convolutions with stride equal to the dilated kernel size are contained in the search space of XD-operations of depth $(1,3,3)$.
\end{Clm}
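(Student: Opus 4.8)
The plan is to realize the transposed convolution as a composition of a dilated convolution with a fixed \emph{upsampling} linear map, and then to invoke Claims~\ref{app:clm:conv} and~\ref{app:clm:composition}. The starting observation is that, because the stride $s=d(k-1)+1$ equals the dilated kernel size, the output blocks associated with distinct input elements do not overlap: the output is a concatenation of consecutive length-$s$ blocks, the $i$-th of which is exactly $x_i$ times the dilated filter of size $s$. This is precisely what one obtains by first \emph{upsampling} the input---inserting $s-1$ zeros between consecutive entries so that $x_i$ lands at position $(i-1)s+1$---and then applying the circular dilated convolution $\DilC_{k,d}$ with filter $\*w$. Since the nonzero entries of the upsampled signal are separated by exactly $s$ positions and the dilated filter has support $s$, each filter copy fills exactly one block without interference, reproducing the transposed convolution.

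The next step is to exhibit the upsampling map as a fixed linear operation $\Lin_{\*P}$ (Definition~\ref{app:def:lin}) whose matrix $\*P$ is a bounded-depth K-matrix. Working on the zero-padded input $\underline{\*x}\in\R^n$, the upsampling map need only send the $m$ genuine entries of $\underline{\*x}$ to positions $1,s+1,\dots,(m-1)s+1$; where the remaining (zero) coordinates are sent is immaterial. Hence I can take $\*P\in\{0,1\}^{n\times n}$ to be any permutation realizing this assignment, which is possible because the $m$ nonzero coordinates are matched to $m$ distinct targets and the $n-m$ zero coordinates fill the rest. As permutations are depth-$2$ K-matrices \citep{dao2020kaleidoscope}, $\*P$ is a depth-$2$ K-matrix and $\Lin_{\*P}$ is the desired upsampling operation.

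With these two pieces in place, the claim follows by composition. By Claim~\ref{app:clm:conv} the dilated convolution $\DilC_{k,d}$ is an XD-operation of depth $(1,3,1)$, with $\*K=\*F_n^{-1}$, $\*L=\*F_n\*P_d$, and $\*M=\*F_n$. Applying Claim~\ref{app:clm:composition} with the input-side factor $\Lin_{\*P}$ of depth $d'=2$ replaces $\*M$ by $\*M\*P=\*F_n\*P$ and yields $\DilC_{k,d}\circ\Lin_{\*P}$ as an XD-operation of depth $(1,3,1+2)=(1,3,3)$; the explicit architecture parameters are $\*K=\*F_n^{-1}$, $\*L=\*F_n\*P_d$, $\*M=\*F_n\*P$, and $\*b=\0_n$. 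Setting the channel gates to $\*C=\1_{c\times c}$ then extends this to the stated multi-channel operation over $\W=\R^{c\times c\times k}$, exactly as for standard convolutions.

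The main obstacle I expect is the verification in the first paragraph: one must check carefully that the stride-equals-kernel-size condition is exactly what makes the upsample-then-convolve identity hold, and that the circular (rather than linear) convolution used in the diagonalization does not introduce wraparound between blocks. Confirming this amounts to an index-bookkeeping argument---tracking which filter taps land in which output block---together with the mild requirement that the padded size $n$ (a power of two, as in Setting~\ref{app:set:single}) be large enough to accommodate all $m$ non-overlapping blocks. The permutation realization of upsampling on the padded domain is the other point to state precisely, but it is routine once one observes that only the placement of the nonzero coordinates matters.
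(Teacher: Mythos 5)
Your proposal is correct and takes essentially the same route as the paper's proof: both express the transposed convolution as a (dilated) convolution applied to the zero-padded input after a permutation that separates the entries by $d(k-1)$ zeros, then invoke Claims~\ref{app:clm:conv} and~\ref{app:clm:composition} together with the fact that permutations are depth-2 K-matrices to obtain depth $(1,3,1+2)=(1,3,3)$. Your write-up merely makes explicit the index bookkeeping, the circular-padding/output-size caveat, and the final architecture parameters, all of which the paper leaves implicit.
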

\begin{proof}
	A transposed convolution is equivalent to a regular convolution with the same filter applied to the input after it has been zero-padded and then permuted to separate all entries by $d(k-1)$ zeros.
	Since permutations are K-matrices of depth 2 the result follows by Claims~\ref{app:clm:conv} and Claim~\ref{app:clm:composition}.
\end{proof}

\begin{Def}
	A {\bf depthwise-separable convolution} in Setting~\ref{app:set:single} with filter size $k$ but with parameter space $\W=\R^{c\times k}\times\R^{c\times c}$ is a parameterizable operation that for any weight $\*w\in\W$ outputs $\Conv_1(\*w_{[2]})\circ\Conv_{k,\*I_c}(\*w_{[1]})$, where $\Conv_{k,\*I_c}$ denotes the convolution in Definition~\ref{app:def:conv} with $\*B=\*I_c$.
\end{Def}
\begin{Rem}
	Since both $\Conv_1$ and $\Conv_{k,\*I_c}$ are XD-operations, by definition depthwise-separable convolutions are contained in the search space of composed XD-operations, which by Claim~\ref{app:clm:skip} also contains all of the above operations.
\end{Rem}

\section{Practical Complexity of XD-Operations}

\begin{table}[!h]
	\begin{threeparttable}
		\captionof{table}{\label{app:tab:complexity}
			Comparison of the computational and memory costs of XD-operations when substituted for convolutions.
			For simplicity, we consider cases with 2d inputs and where the channel and bias parameters are fixed.
		}
		\begin{tabular}{lcccccccc}
			\hline
			& input & kernel & \multicolumn{2}{c}{minutes / epoch} & \multicolumn{2}{c}{memory (Gb)} & \multicolumn{2}{c}{param. ($\times10^6$) }\\
			Task (backbone) & size & size & $\Conv$ & $\XD$ & $\Conv$ & $\XD$ & $\Conv$ & $\XD$ \\
			\hline
			CIFAR-10 (WRN-40-4) & $32$ & $3$ & 1.4 & 4.3 & 3.73 & 15.6 & 8.96 & 9.08 \\
			Darcy Flow (Conv4$^\ast$) & $85$ & 13 & 0.028 & 0.14 & 4.51 & 5.53 & 0.701 & 0.744  \\
			PSICOV (ResNet-18) & $128$ & 3 & 5.9 & 11 & 1.50 & 10.7 & 0.038 & 0.549  \\
			\hline
		\end{tabular}
		\begin{tablenotes}
			\item[$\ast$] Four-layer convolutional network with parameterized skip (shortcut) connections derived from the FNO network \citep{li2021fno} as described in Section~\ref{sec:pde}.
		\end{tablenotes}
	\end{threeparttable}
\end{table}

In this section we report a detailed comparison of computational costs of the XD-operation compared to a convolution;
this is presented in Table~\ref{app:tab:complexity}.
Due to their familiarity, we present results for tasks that have 2d inputs and thus use 2d convolutions in their default backbone.
Note that since XD-operations are more general than convolutions, they must by definition be at least as expensive as convolutions in both computation and memory.
While in this paper our focus is on absolute performance using learning metrics (e.g. test error), we view finding a good tradeoff between the performance of XD-operations on certain tasks and convolutions, for example by restricting the expressivity of XD-operations, as important directions for future work.

\newpage
\section{Experimental Details: CIFAR-10 and Permuted CIFAR-10}

\begin{table}[!h]
	\centering
	\begin{threeparttable}
		\caption{\label{app:tab:cifar}
			Architecture optimizer settings on CIFAR-10 tasks.
			Note that the step-size is updated using the same schedule as the backbone.
		}
		\footnotesize
		\begin{tabular}{cclcccc}
			\hline
			search space & backbone & task & optimizer & initial step-size & warmup epochs & perturb \\
			\hline
			\multirow{4}{*}{$\tilde\Search_\disc$}
			& \multirow{2}{*}{LeNet}
			& CIFAR-10 & Adam & 1E-1 & 0 & 0.1 \\
			&& Permuted & Adam & 1E-1 & 50 & 0.875 \\
			\cline{2-7}
			& \multirow{2}{*}{ResNet-20}
			& CIFAR-10 & Adam & 1E-3 & 0 & 0.1 \\
			&& Permuted & Adam & 1E-1 & 0 & 0.875 \\
			\hline
			\multirow{4}{*}{$\Search_\XD$}
			& \multirow{2}{*}{LeNet}
			& CIFAR-10 & Adam & 1E-4 & 0 & - \\
			&& Permuted & Adam & 1E-3 & 0 & -\\
			\cline{2-7}
			& \multirow{2}{*}{ResNet-20}
			& CIFAR-10 & Adam & 1E-4 & 50 & - \\
			&& Permuted & Adam & 1E-3 & 0 & - \\
			\hline
		\end{tabular}
	\end{threeparttable}
\end{table}

For our experiments with image classification backbones we use the standard CIFAR-10 data \citep{krizhevsky2009cifar} and a permuted version where all rows and columns are identically permuted.
For unpermuted data we use standard data augmentation \citep{he2016resnet} while for permuted data we do not use any data augmentation.
As specified in Section~\ref{sec:chrysalis}, we keep the training routine of the model weights the same and tune only the architecture optimizer, the settings of which are specified in Table~\ref{app:tab:cifar}.
Note that for the DARTS operation space we specify a ``perturb'' parameter that specifies how unbiased the initial architecture parameters are towards the backbone operation;
specifically, we initialize architecture parameters so as to assign one minus this quantity as the weight to the backbone operation, so 0.875 means the initialization is uniform (since $|\tilde\Search_\disc|=8$) while 0.1 means the backbone operation is assigned 0.9 of the weight.

\subsection{LeNet}

The LeNet backbone we consider consists of two $\Conv_{5\times 5}$ layers, each followed by $\MaxP_{2\times 2}$, and two fully connected layers.
When warm-starting with XD-operations we use $\AvgP_{2\times2}$ instead of $\MaxP_{2\times2}$, while when warm-starting with the DARTS operations we use $\MaxP_{3\times 3}$.
For the baseline training routine we use 200 epochs of Momentum(0.9), with the first 100 at learning rate 0.01, the next 50 at 0.005, and the last 50 at 0.001.

\subsection{ResNet-20}

We use the implementation and training routine provided here: \url{https://github.com/akamaster/pytorch_resnet_cifar10}.
When replacing operations in the backbone we substitute for both the $\Conv_{3\times3}$ operations and the skip-connections $\Id$;
some of the latter are downsampled, which XD-operations can handle as strides.

\subsection{WideResNet-40-4}

We use the same implementation as for ResNet-20 but adapt the original WRN training routine \citep{zagoruyko2016wideresnet}, except with weight-decay set to $10^{-4}$ (as in ResNet-20);
on the regular CIFAR-10 tasks this does not seem to affect performance.
To conserve computation and memory, we do not tune the architecture optimizer parameters here and simply use the same ones used for ResNet-20;
furthermore, we fix the channel and bias parameters of XD-operations and do not allow the kernel size to be larger the $3\times 3$.
Because of these modifications, we only use our evaluation here as a sanity check for large-network performance of XD-operations and do not include it in the main results.

\newpage
\subsection{DARTS Cell Search}

To search the full DARTS search space, which is a standard NAS benchmark, we use GAEA PC-DARTS, a recent state-of-the-art method \citep{li2021gaea}, using code made available by the authors here: \url{https://github.com/liamcli/gaea_release}.
On CIFAR-10 we simply use their best reported cell but evaluate it using the ``base" routine \citep{yang2020nas}, i.e. without auxiliary losses or additional data augmentation;
this is to obtain fair comparison with the other backbone models.
Note that the model is still much larger and the training routine much more intensive.
On permuted data we follow the standard three-stage pipeline in which we run search four times, train all four found cells and select the best one, and finally train that cell multiple times.

\subsection{DenseNAS Search}

We use the DenseNAS search and evaluation code released by the authors here: \url{https://github.com/JaminFong/DenseNAS}.
While the search space is designed for ImageNet \citep{russakovsky2015imagenet}, we adapt it to CIFAR-10 by taking the DenseNAS-R1 setting and downscale the input sizes to match 32x32 images used.

\begin{table}[!t]
	\centering
	\begin{threeparttable}
			\captionof{table}{\label{app:tab:cifarres}
				Search space comparison on CIFAR-10.
				Validation accuracies are averages of three trials.
			}
			\begin{tabular}{lcccc}
				\hline
				Backbone & Search Space & CIFAR-10 & Permuted$^\ast$ & Cost (hours$^\dagger$) \\
				\hline
				\multirow{3}{*}{LeNet} & backbone & $75.5\pm0.1$ & $43.7\pm0.5$ & 0.3 \\
				& $\tilde\Search_\disc$ & $75.6\pm3.4$ & $47.7\pm1.0$ & 1.0 \\
				& $\Search_\XD$ & $77.7\pm0.7$ & $63.0\pm1.0$ & 0.9 \\
				\hline
				\multirow{3}{*}{ResNet-20} & backbone & $91.7\pm0.2$ & $58.6\pm0.7$ & 0.6 \\
				& $\tilde\Search_\disc$ & $92.7\pm0.2$ & $58.0\pm1.0$ & 5.3 \\
				& $\Search_\XD$ & $92.4\pm0.2$ & $73.5\pm1.6$ & 5.6 \\
				\hline
				\multirow{3}{*}{WRN-40-4} & backbone & $95.2\pm0.1$ & $64.7\pm0.9$ & 4.6 \\
				& $\tilde\Search_\disc$ & $95.2\pm0.2$ & $61.3\pm1.3$ & 19.9 \\
				& $\Search_\XD$ & $95.0\pm0.1$ & $72.9\pm0.8$ & 14.3 \\
				\hline
				ResNet-18 & DenseNAS & $94.5\pm0.3$ & $61.6\pm3.3$ & 3.6  \\
				Cell & DARTS$^\ddagger$ & $96.0\pm0.2$ & $66.3\pm0.5$ & 28.6 \\
				\hline
			\end{tabular}
			\begin{tablenotes}
				\item[$\ast$] No data augmentation used in the permuted case.
				\item[$\dagger$] On a V100 GPU; time for DARTS Cell is training cost only.
				\item[$\ddagger$] Search using GAEA PC-DARTS \citep{li2021gaea}; training using ``base'' routine~\citep{yang2020nas}.
			\end{tablenotes}
		\end{threeparttable}
\end{table}

\newpage
\section{Experimental Details: Solving PDEs}

For our PDE experiments, we use the FNO code and setup \citep{li2021fno} provided here: \url{https://github.com/zongyi-li/fourier_neural_operator}. We use the same training routine and settings as the backbone architecture for each task and only tune the architecture optimizer. We consider the following hyperparameters for the architecture optimizer: Adam vs. SGD (with or without momentum), initial learning rate, and number of warmup epochs. The final hyperparameters for each task can be found in Table~\ref{app:tab:pdeopt}. 
Our CNN backbone is analogous to the FNO architecture used for each problem. In particular, the CNN backbone architecture used for each task is simply the FNO architecture where FNO layers of dimension $N$ with $m$ modes are replaced by $N$-dimensional convolutional layers with filters of size $(m+1)^N$ and circular padding to match the dimensionality of FNO. In Table~\ref{app:tab:pde1dres} and Table~\ref{app:tab:pde2dres} we present reported \cite{li2021fno}, reproduced, and our own results on the 1d Burgers' equation and 2d Darcy Flow. 

For AutoDL we use the code and setup provided here: \url{https://github.com/NoamRosenberg/autodeeplab}. 
We only conduct search on the lowest resolution and use the resulting architecture at higher resolutions.
Search was conducted for 40 epochs, as in the original paper, and the search learning rate was tuned.

\begin{table}[!h]
	\centering
	\begin{threeparttable}
		\caption{\label{app:tab:pdeopt}
			Architecture optimizer settings on PDE tasks. Note that the step-size is updated using the same schedule as the backbone. 
		}
		\begin{tabular}{lccc}
			\hline
			task & optimizer & initial step-size  & warmup epochs \\
			\hline
			1d Burgers' equation & Adam & 1E-3 & 0 \\
			1d Burgers' equation (FNO init) & Momentum(0.5) & 1E-4 & 250 \\
			2d Darcy Flow & Momentum(0.5) & 1E-1 & 0 \\
			2d Darcy Flow (FNO init) & Momentum(0.5) & 1E-1 & 0 \\
			2d Navier Stokes ($\nu=10^{-4}, T=30$) & Momentum(0.5) & 5E-3 & 0 \\
			2d Navier Stokes ($\nu=10^{-5}, T=20$) & Momentum(0.5) & 1E-3 & 0 \\
			\hline
		\end{tabular}
	\end{threeparttable}
\end{table}

\begin{table*}[!h]
	\centering
	\begin{threeparttable}
		\caption{\label{app:tab:pde1dres}
			Test relative errors on the 1d Burgers' equation. We were not able to match the FNO-1d results reported by the authors \citep{li2021fno} using their published codebase, however, our proposed XD operations outperform our reproduction of their results at every resolution. Furthermore, we outperform their reported test relative errors on every resolution except $s=4096$, where we roughly match their performance. 
		}
		\begin{tabular}{lcccccr}
					\hline
					Method (source)                      & $s=256$ & $s=512$ & $s=1024$ & $s=2048$ & $s=4096$ & $s=8192$ \\ 
					\hline
					NN   \citep{li2021fno}             &    0.4714    &   0.4561      &   0.4803      &   0.4645      &     0.4779    & 0.4452 \\
					GCN  \citep{li2021fno}              &   0.3999     &    0.4138     &    0.4176     &   0.4157     &   0.4191      &  0.4198 \\
					FCN \citep{li2021fno}               &     0.0958   &    0.1407     &    0.1877     &    0.2313     &   0.2855      &  0.3238 \\
					PCANN  \citep{li2021fno}              &     0.0398   &    0.0395     &   0.0391      &     0.0383    &    0.0392     & 0.0393  \\
					GNO  \citep{li2021fno}              &    0.0555    &   0.0594      &    0.0651     &    0.0663     &      0.0666   & 0.0699  \\
					LNO   \citep{li2021fno}             &    0.0212    &    0.0221     &      0.0217   &    0.0219     &    0.0200     & 0.0189  \\
					MGNO  \citep{li2021fno}              &      0.0243  &     0.0355    &      0.0374   &    0.0360     &    0.0364     & 0.0364  \\
					FNO-1d \citep{li2021fno}              &    0.0149    &     0.0158    &   0.0160      &  0.0146       &   \textbf{0.0142}      &  0.0139 \\
					\hline
					CNN (ours)                &    0.0518    &   0.1220      &   0.1830      &   0.2280      &     0.2730    & 0.2970 \\
					FNO-1d (reproduced)                           &  0.0181      &    0.0191     &    0.0188     &     0.0184    &   0.0183     & 0.0183 \\
					CNN XD (ours) &    \textbf{0.0141}    &     \textbf{0.0079}    &    \textbf{0.0154}     &  \textbf{0.0099}       &  0.0145       & \textbf{0.0123} \\
					FNO-1d XD (ours) &     0.0153   &     0.0154   &   0.0154      &    0.0167     &    0.0160     & 0.0155  \\
					\hline
				\end{tabular}
	\end{threeparttable}
\end{table*}

\begin{table*}[!h]
	\centering
	\begin{threeparttable}
		\caption{\label{app:tab:pde2dres}
			Test relative errors on 2d Darcy Flow. Our reproduction of the FNO-2d results outperform those reported by the authors \citep{li2021fno}. Nonetheless, our proposed XD operations outperform both our reproduction and the reported results at every resolution. 
		}
		\begin{tabular}{lccccr}
					\hline
					Method (source)                      & $s=85$ & $s=106$ & $s=141$ & $s=211$ & $s=421$ \\ 
					\hline
					NN \citep{li2021fno}                &     0.1716   &     -    &    0.1716     &    0.1716    &   0.1716  \\
					GCN \citep{li2021fno}               &     0.0253   &     -    &   0.0493      &    0.0727     &       0.1097    \\
					FCN \citep{li2021fno}               &   0.0299     &    -     &   0.0298      &     0.0298    &     0.0299      \\
					PCANN \citep{li2021fno}               &    0.0244    &    -     &    0.0251     &    0.0255     &    0.0259       \\
					GNO \citep{li2021fno}               &    0.0346    &    -     &    0.0332     &    0.0342     &       0.0369    \\
					LNO \citep{li2021fno}               &    0.0520    &    -     &     0.0461    &     0.0445    &     -     \\
					MGNO \citep{li2021fno}               &    0.0416    &     -    &    0.0428     &   0.0428      &      0.0420    \\
					FNO-2d \citep{li2021fno}               &    0.0108    &     -    &    0.0109     &      0.0109   &    0.0098      \\
					\hline
					CNN (ours)                 &     0.0404   &    0.0495     &   0.0613      &    0.0813     &  0.1150       \\
					FNO-2d (reproduced)                           &     0.0096   &    0.0092     &    0.0091     &     0.0091    &    0.0091     \\
					CNN XD (ours)  &    \textbf{0.0065}    &     \textbf{0.0065}    &    \textbf{0.0065}     &  \textbf{0.0071}       &  \textbf{0.0066} \\
					FNO-2d XD (ours)  &    0.0082    &     0.0079   &      0.0077   &      0.0076   &    0.0074  \\
					\hline
				\end{tabular}
	\end{threeparttable}
\end{table*}

\begin{figure}[!h]
	\includegraphics[width=\linewidth]{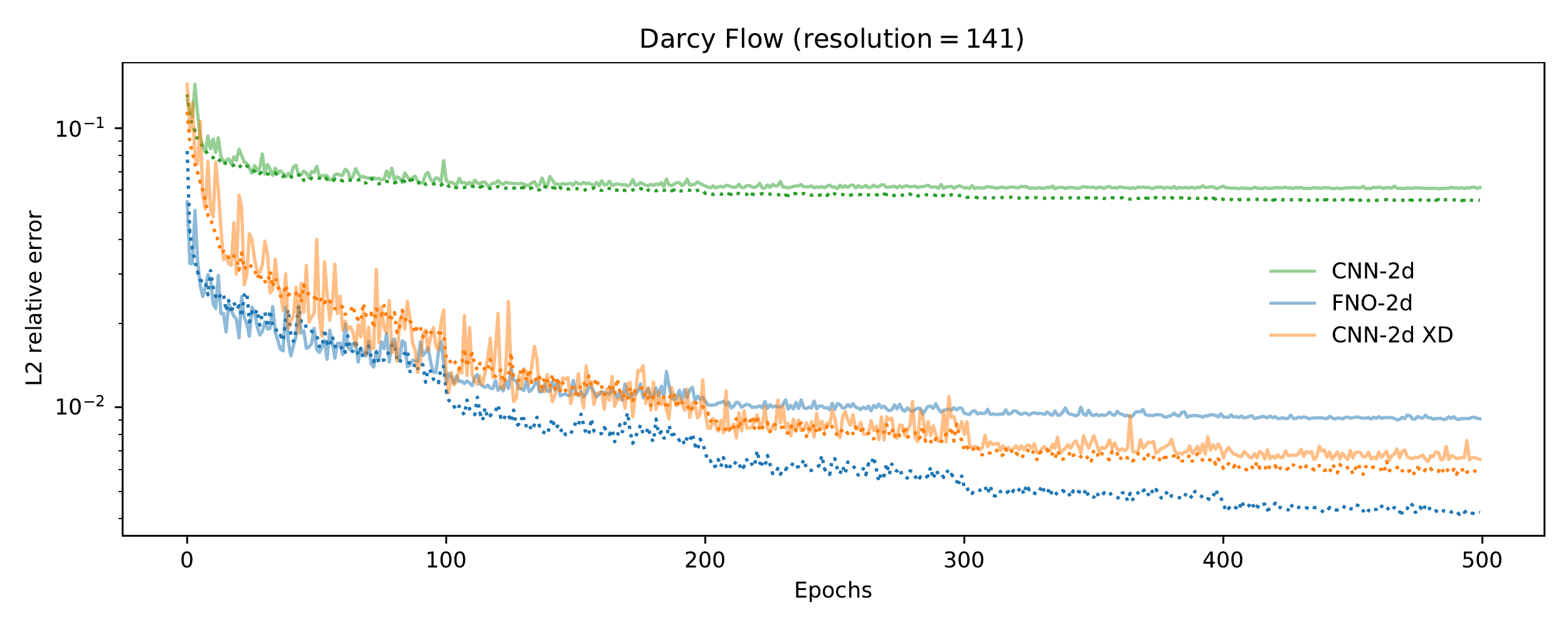}
	\caption{Training curves (dotted) and test curves (solid) on Darcy Flow at resolution 141, showing better generalization of XD-operations.}
\end{figure}

\newpage
\section{Experimental Details: Protein Folding}

\begin{table}[!h]
	\centering
	\begin{threeparttable}
		\caption{\label{app:tab:proteinopt}
			Architecture optimizer settings on for our protein folding experiments, across different ResNet depths. Note that the same step-size is used throughout since the backbone has no step-size schedule. 
		}
		\begin{tabular}{lccc}
			\hline
			search space & optimizer & step-size  & warmup epochs \\
			\hline
			ResNet-4 XD & Adam & 1E-4 & 2 \\
			ResNet-6 XD & Momentum(0.99) & 1E-4 & 2 \\
			ResNet-10 XD & Momentum(0.99) & 1E-3 & 2 \\
			ResNet-18 XD & Momentum(0.9) & 5E-4 & 2 \\
			ResNet-34 XD & Momentum(0.9) & 5E-4 & 2 \\
			\hline
		\end{tabular}
	\end{threeparttable}
\end{table}

\begin{table}[!h]
	\centering
	\begin{threeparttable}
		\caption{\label{app:tab:proteinperf}
			Test MAE$_8$ of the Dilated ResNet of \cite{adhikari2020}, compared to a standard ResNet backbone and XD-operations applied to ResNet. Results are averaged over 3 trials. 
		}
		\begin{tabular}{lccccc}
			\hline
			 Method & depth $ = 4$ & depth $ = 6$ & depth $ = 10$ & depth $ = 18$ & depth $ = 34$ \\
			 \hline
			 ResNet & $5.99\pm0.43$ & $5.30\pm0.11$ & $4.91\pm0.25$ & $4.80\pm0.07$ & $4.66\pm0.15$ \\
			 Dilated ResNet & $6.04\pm0.33$ & $5.49\pm0.02$ & $4.64\pm0.08$ & $4.59\pm0.22$ & $4.50\pm0.13$ \\
			 ResNet XD & ${\bf5.59\pm0.09}$ & ${\bf4.59\pm0.17}$ & ${\bf4.25\pm0.16}$ & ${\bf4.22\pm0.03}$ & ${\bf4.00\pm0.07}$ \\
			\hline
		\end{tabular}
	\end{threeparttable}
\end{table}

For our protein folding experiments, our code is a PyTorch re-implementation of the PDNET code and setup \cite{adhikari2020} provided here: \url{https://github.com/ba-lab/pdnet}. As before, we use the same training routine and settings as the Dilated ResNet architecture and only tune the architecture optimizer. We consider the following hyperparameters for the architecture optimizer: Adam vs. SGD (with or without momentum), learning rate, and number of warmup epochs. The final hyperparameters for each depth can be found in Table~\ref{app:tab:proteinopt}. Our ResNet backbone differs from Dilated ResNet in that its dilation rate is set to 1 in every convolutional layer. In Table~\ref{app:tab:proteinperf}, we present average MAE$_8$ on the PSICOV test set for each method at each depth. 


\section{Experimental Details: Music Modeling and Sequence Modeling}

\begin{table}[!h]
	\centering
	\begin{threeparttable}
		\caption{\label{app:tab:seq}
			Architecture optimizer settings on sequence modeling tasks.
			Note that the step-size is updated using the same schedule as the backbone.
		}
		\begin{tabular}{lccc}
			\hline
			task & optimizer & initial step-size & warmup epochs \\
			\hline
			Permuted MNIST & Adam & 2E-4 & 0 \\
			JSB Chorales & Adam & 2E-4 & 25 \\
			Nottingham & Adam & 2E-3 & 0 \\
			Penn Treebank & Adam & 2E-6 & 0 \\
			\hline
		\end{tabular}
	\end{threeparttable}
\end{table}

\begin{table*}[!h]
	\centering
	\begin{threeparttable}
		\caption{\label{app:tab:seqres}
			XD-operations applied to TCNs compared to recent empirical results in sequence modeling.
			Our results are averages of three trials.
			Methods achieving within one deviation of the best performance are {\bf bolded}.
		}
		\footnotesize
		\begin{tabular}{lcccc}
			\hline
			& Permuted MNIST$^\ast$ & JSB Chorales & Nottingham & Penn Treebank \\
			Method (source)  & (error) & (loss) & (loss) & (perplexity) \\
			\hline
			LSTM \citep{bai2018tcn} & 14.3 & 8.45 & 3.29 & 78.93 \\
			GRU \citep{bai2018tcn} & 12.7 & 8.43 & 3.46 & 92.48 \\
			RNN \citep{bai2018tcn} & 74.7 & 8.91 & 4.05 & 114.50 \\
			TCN backbone \citep{bai2018tcn} & 2.8 & 8.10 & 3.07 & 88.68 \\
			TrellisNet \citep{bai2019trellis} & 1.87 & - & - & {\bf 54.19} \\
			R-Transformer \citep{wang2020rtransformer} & - & - & {\bf 2.37} & 84.38 \\
			HiPPO-LegS \citep{gu2020hippo} & {\bf 1.7} & - & - & - \\
			\hline
			TCN backbone (reproduced) & $2.89\pm0.04$ & $8.17\pm0.01$ & $2.97\pm0.01$ & $88.49\pm0.31$ \\
			TCN backbone XD (ours) & ${\bf 1.75\pm0.11}$ & ${\bf 8.07\pm0.02}$ & $2.81\pm0.05$ & $84.11\pm0.25$ \\
			Undilated TCN (ours) & $11.3\pm2.1$ & $8.16\pm0.04$ & $3.21\pm0.02$ & $94.30\pm0.33$ \\
			Undilated TCN XD (ours) & ${\bf 1.77\pm0.10}$ & ${\bf 8.07\pm0.01}$ & $2.84\pm0.02$ & $85.04\pm0.49$ \\
			\hline
		\end{tabular}
		\begin{tablenotes}\footnotesize
			\item[$\ast$] We use depth $\*d=(3,3,3)$ XD-operations for permuted MNIST experiments;
			elsewhere we use~$(1,3,1)$.
			Results within a standard deviation of the best are {\bf bolded}.
		\end{tablenotes}
	\end{threeparttable}
\end{table*}

For our sequence modeling experiments we use the TCN code \citep{bai2018tcn} provided here: \url{https://github.com/locuslab/TCN}.
As before we use the same settings and training routine as the backbone for all tasks, tuning only the architecture optimizer.
The specific settings are provided in Table~\ref{app:tab:seq}.
For both the baselines and XD-operations we use the same optimizer settings for both the dilated and undilated TCN backbones.
In Table~\ref{app:tab:seqres} we present results for both music modeling and for two additional benchmarks---permuted MNIST and Penn Treebank---on which we see a similar pattern of XD-operations being able to recover and even beat (dilated) TCN performance starting from an undilated network.

\end{document}